\newtheorem{theorem}{Theorem}
\newtheorem{definition}{Definition}
\newtheorem{corollary}{Corollary}
\newtheorem{remark}{Remark}
\newtheorem{lemma}{Lemma}
\newtheorem{proposition}{Proposition}
\DeclareMathOperator{\co}{co}
\DeclareMathOperator*{\argmin}{arg\,min}
\newcommand{\norm}[1]{\left\lVert#1\right\rVert}
\newcommand{\todo}[1]{{#1}}
\definecolor{blue}{rgb}{0.0, 0.0, 1.0}
\newcolumntype{L}[1]{>{\raggedright\let\newline\\\arraybackslash\hspace{0pt}}m{#1}}
\newcolumntype{C}[1]{>{\centering\let\newline\\\arraybackslash\hspace{0pt}}m{#1}}
\newcolumntype{R}[1]{>{\raggedleft\let\newline\\\arraybackslash\hspace{0pt}}m{#1}}
\begin{document}

\title{Data-Driven Robust Barrier Functions for Safe, Long-Term Operation}

\author{
        Yousef Emam$^{1}$, Paul Glotfelter$^{2}$, Sean Wilson$^{1}$, Gennaro Notomista$^{1}$, Magnus Egerstedt$^{1}$
  \thanks{This research was sponsored by Award No. 1724058 from the National Science Foundation and by the Army Research Lab through ARL DCIST CRA W911NF-17-2-0181.}
  \thanks{$^{1}$Y. Emam, S. Wilson, G. Notomista and M. Egerstedt are with the Institute for Robotics and Intelligent Machines, Georgia
    Institute of Technology, Atlanta, GA 30332, USA,
    \{emamy, sean.t.wilson, g.notomista, magnus\}@gatech.edu.}%
  \thanks{$^{2}$P. Glotfelter is with Optimus Ride, MA 0$7$10, Massachusetts, USA {pglotfel@gmail.com}}
}



\maketitle

\begin{abstract}
Applications that require multi-robot systems to operate independently for extended periods of time in unknown or unstructured environments face a broad set of challenges, such as hardware degradation, changing weather patterns, or unfamiliar terrain.
To operate effectively under these changing conditions, algorithms developed for long-term autonomy applications require a stronger focus on robustness.
Consequently, this work considers the ability to satisfy the operation-critical constraints of a disturbed system in a modular fashion, which means compatibility with different system objectives and disturbance representations.
Toward this end, this paper introduces a controller-synthesis approach to constraint satisfaction for disturbed control-affine dynamical systems by utilizing Control Barrier Functions (CBFs).  The aforementioned framework is constructed by modelling the disturbance as a union of convex hulls and leveraging previous work on CBFs for differential inclusions. 
This method of disturbance modeling grants compatibility with different disturbance-estimation methods.  For example, this work demonstrates how a disturbance learned via a Gaussian process may be utilized in the proposed framework.
These estimated disturbances are incorporated into the proposed controller-synthesis framework which is then tested on a fleet of robots in different scenarios.

\end{abstract} 

\begin{IEEEkeywords}
Robust/Adaptive Control of Robotic Systems,  Robot Safety, Multi-Robot Systems, Learning and Adaptive Systems, Collision Avoidance.
\end{IEEEkeywords} 

\IEEEpeerreviewmaketitle
\section{Introduction}
\label{sec:intro}

\IEEEPARstart{T}{he} deployment of multi-robot teams in real-world applications often requires operating in dynamic environments for extended periods of time. Examples of such real-world deployment scenarios include search and rescue \cite{kitano1999robocup} and precision agriculture \cite{tokekar2016sensor}.  In these scenarios, it is often difficult to model the system exactly due to environmental disturbances, such as varying terrain, weather patterns, and intrinsic changes in the dynamics of the robots (e.g., motor degradation).
In addition to degrading the performance of the robots, these disturbances can even lead to catastrophic failures for safety-critical systems.  Indeed, if these disturbances are not addressed, the possibility of failure becomes almost assured if the robots are required to operate over long time horizons. Therefore, long-term deployment motivates the need for robust control frameworks that can efficiently account for these uncertainties in a rigorous manner. 

Modeling uncertainty as a disturbance inherently offers a factored approach: estimating the disturbance and controlling a disturbed system.  Some established methods for addressing these issues are robust- and adaptive-control techniques \cite{freeman2008robust, fukushima2007adaptive, bemporad1999robust}, designed around system models which can be seen as the combination of a nominal component and a disturbance quantifying the amount by which the true system might differ from the nominal one.  

A different body of work in the literature attempts to deal with disturbed dynamical systems by solely relying on learning methods. For example, one approach is to directly learn the inverse dynamics of the disturbed system and then use the output of the learning model to control the system as in \cite{nguyen2009model}. Another such approach, under the end-to-end learning category, is to first learn the dynamics of the system and then use model-based Reinforcement Learning (RL) to obtain the desired controller \cite{deisenroth2013gaussian}. Typically, one benefit of these learning-based approaches is that they require less domain knowledge and are able to actively improve the performance of the controller \cite{deisenroth2013gaussian}. However, despite their flexibility, many data-driven methods suffer from a major deficiency compared to their model-based counterparts; namely they do not provide safety guarantees, and as such, their applicability to safety-critical control systems is, by nature, limited \cite{vinogradska2016stability}. 

As a way to mitigate this issue, a variety of \textit{safe-learning} approaches have been proposed in the literature (e.g., \cite{berkenkamp2017safe,  khansari2014learning}). These methods, however, lack flexibility in the following sense.  For example, \cite{ravanbakhsh2017learning} introduces a framework for learning Control Lyapunov Functions (CLF) and a so-called verifier that validates them. However, when the model of the system is inaccurate, the verifier needs to check an infinite number of inequalities throughout the state space which is computationally intractable \cite{ito2017second}. In contrast, in our previous work \cite{emam2019robust}, we presented a framework that utilizes convex hulls to model disturbances in the context of a Control Barrier Function (CBF) based safety framework.  The use of convex hulls renders the problem computationally tractable, since the uncountable disturbance set can be exactly accounted for by checking the extreme points of the convex hull, which, intuitively, represent the worst-case scenarios. The supporting theoretical results of this work leverage recent work on CBFs for differential inclusions \cite{PaulNBF, glotfelter2018}.  This approach has the benefit of being more compatible with data-driven, learning approaches (e.g., \cite{berkenkamp2016safe,wang2018safe}) which we demonstrate in this paper, as well as the classical techniques for estimating the state of an uncertain system (e.g., applying a Kalman filter).  Additionally, the usage of CBFs allows constraint satisfaction in a modular, objective-independent fashion. In this paper, we extend and generalize the approach in \cite{emam2019robust} by presenting two different ways the disturbance can affect the dynamical system, and discuss how this method can be coupled with data-driven frameworks for disturbance estimation. \todo{Specifically, for control affine dynamical systems, we address the disturbances on the drift and control dynamics respectively.} 

Prior work on barrier functions has mainly addressed smooth barrier functions, formulating the associated forward-invariance results with respect to continuous dynamical systems.  Moreover, some work has focused on robust barrier functions for uncertain systems such as \cite{nguyen2016optimal, gurriet2018invariance}.  As opposed to these works, where the disturbance is assumed to take the form of an $\epsilon$-ball (bounded in some norm), we allow for a significantly wider class of disturbances and model the disturbed system through differential inclusions. 

\todo{Similarly, in \cite{taylor2020adaptive}, the authors introduce adaptive CBFs~(aCBFs) which ensure the forward invariance of a desired safe set for systems with structured parametric uncertainty through parameter adaptation. Additionally, in \cite{9129764}, the authors introduce Robust aCBFs (RaCBFs) which build on \cite{taylor2020adaptive} and remedy the overly conservative behavior caused by the restrictive condition required by aCBFs. The authors from \cite{9129764} also combine RaCBFs with Set Membership IDentification (SMID) to estimate the uncertainty using data. Although \cite{9129764} is the most similar to our work in spirit, aCBFs and RaCBFs can solely be applied to systems where the uncertainty is a structured parametric uncertainty only affecting the drift dynamics. Moreover, although less restrictive than aCBFs, RaCBFs guarantee the forward invariance of a tightened safe set and can still be conservative if the maximum parameter error is large. In contrast, our approach assumes the uncertainty can be modelled as the union of convex hulls affecting either the drift or the control dynamics.}

To summarize, the main contributions of this paper with respect to prior work are the following: (i)~we build on a novel robust CBF formulation initially introduced in
\cite{emam2019robust}. This formulation efficiently accounts for disturbances in the dynamical system, in an exact manner, by modelling the uncertainty using convex sets that can be written as the convex hull of a finite number of points. Moreover, the formulation can also account for unions of convex sets which allows the method to encompass a wide variety of disturbances. In this paper, we expand on this methodology by providing results for different types of disturbances affecting the system. (ii)~We demonstrate how the robust CBF formulation can be coupled with data-driven methods for sample efficient online disturbance estimation. (iii)~A controller-synthesis procedure is introduced using the proposed robust CBFs along with Gaussian Processes for disturbance estimation. (iv)~The results are then specialized to differential-drive robots for an experimental application.

The paper is organized as follows. Section~\ref{sec:background} introduces the necessary background material and notation for the paper. In Section~\ref{sec:barrier-functions-for-disturbed}, we model various types of disturbances pertaining to control-affine systems and derive sufficient conditions for the robustness of the forward-invariance property (via CBFs) with respect to each of the disturbed control system.  In Section~\ref{sec:disturb-estimation}, we develop a method for estimating such disturbances through the use of Gaussian Processes in a way that is conducive to the use of the novel robust CBFs.  Furthermore, Section~\ref{sec:control-synthesis-via} provides some controller-synthesis results with respect to disturbed control systems.  Lastly, in Sections \ref{sec:robust-collision-avoidance} and \ref{sec:experiments}, we formulate the controller-synthesis procedure for differential-drive robots and apply the proposed method to the Robotarium \todo{\cite{pickem2017robotarium}, a remotely accessible multi-robot testbed,} in two different experiments.  Section~\ref{sec:conclusion} concludes the paper.

\section{Background Material} 
\label{sec:background}
In this section, we introduce the notation and background material used in the remainder of the paper. Namely, we introduce the theory of Control Barrier Functions (CBFs) and differential inclusions. 

\subsection{Notation}
\label{subsec:notation}

The notation $\mathbb{R}_{\geq a}$ represents the set of nonnegative real numbers greater or equal to $a$.  The expression $B(x', \delta)$ denotes an open ball of radius $\delta$ centered on a point $x' \in \mathbb{R}^{n}$.  The operation $\co$ represents the convex hull of a set.  Given a set $A$, $2^{A}$ denotes the power set of $A$.  \todo{We define the convex hull of $p > 0$ points $\psi_i \in \mathbb{R}^n, \; \forall i~\in~\{1, \ldots, p\}$, as  
\begin{align}
\co \Psi &= \co \{\psi_1, \ldots, \psi_p\} \\
            &= \{ \sum^p_{i=0} \theta_i \psi_i | \sum^p_{i=0} \theta_i = 1, 0 \leq \theta_i \leq 1, \forall i\}.
\end{align}} 
A function $\alpha : \mathbb{R} \to \mathbb{R}$ is extended class-$\mathcal{K}$ if $\alpha$ is continuous, strictly increasing, and $\alpha(0) = 0$.  A function $\beta : \mathbb{R}_{\geq 0} \times \mathbb{R}_{\geq 0} \to \mathbb{R}_{\geq 0}$ is class-$\mathcal{KL}$ if it is class-$\mathcal{K}$ in its first argument and, for each fixed $r$, $\beta(r, \cdot)$ is continuous, decreasing, and $\lim_{s \to \infty} \beta(r, s) = 0$.

\subsection{Control Barrier Functions} 
\label{subsec:control-barrier-functions}

Control Barrier Functions (CBFs) are formulated with respect to control systems \cite{ames2014,xu2015,AmesBarriers}, and this work considers control-affine systems
\begin{equation}
    \label{eq:control-affine}
    \dot{x}(t) = f(x(t)) + g(x(t))u(x(t)) , x(0) = x_{0} ,
\end{equation}
where $f : \mathbb{R}^{n} \to \mathbb{R}^{n}$, $g : \mathbb{R}^{n} \to \mathbb{R}^{n \times m}$, and $u : \mathbb{R}^{n} \to \mathbb{R}^{m}$ are continuous.  These types of systems capture many robotic systems (e.g., differential-drive robots, quadrotors, autonomous vehicles) and remain amenable to controller synthesis. A set $\mathcal{C}$ is called forward invariant with respect to \eqref{eq:control-affine} if given any solution (potentially nonunique) to \eqref{eq:control-affine} $x : [0, t_{1}] \to \mathbb{R}^{n}$,
\begin{equation}
    x_{0} \in \mathcal{C} \implies x(t) \in \mathcal{C}, \forall t \in [0, t_{1}] .
\end{equation}

Barrier functions guarantee forward invariance of a particular set that typically represents a constraint, such as collision avoidance or connectivity maintenance.   Specifically, a barrier function is a continuously differentiable function $h : \mathbb{R}^{n} \to \mathbb{R}$ (sometimes referred to as a candidate barrier function), and the so-called safe set $\mathcal{C} \subset \mathbb{R}^{n}$ is defined as the super-zero level set to $h$
\begin{equation}
    \mathcal{C} = \{x' \in \mathbb{R} : h(x') \geq 0\} .
\end{equation}
Now, the goal becomes to ensure the forward set invariance of $\mathcal{C}$, which can be done equivalently by guaranteeing positivity of $h$ along trajectories.

Positivity can be shown if there exists a locally Lipschitz extended class-$\mathcal{K}$ function $\alpha : \mathbb{R} \to \mathbb{R}$ and a continuous function $u : \mathbb{R}^{n} \to \mathbb{R}^{m}$ such that
\begin{equation}
    \label{eq:barrier-certificate-reg}
    \nabla h(x')^{\top}(f(x') + g(x')u(x')) \geq -\alpha(h(x')), \forall x' \in \mathbb{R}^{n}.
\end{equation}
Then, $h$ is called a valid CBF for \eqref{eq:control-affine} \cite{AmesBarriers}. Moreover, \eqref{eq:barrier-certificate-reg} does not explicitly account for any uncertainty in the system.  As such, real-world disturbances (e.g., packet loss or wheel slip) can cause the system to violate the constraint.  Toward resolving this issue, the next section formulates an analogous result in the context of uncertain control systems.

\subsection{Differential Inclusions}
\label{subsec:differential-inclusions}

Differential inclusions are a generalization of differential equations that have been used to represent a variety of problems including perturbed Ordinary Differential Equations (ODEs) as in \eqref{eq:control-affine} and discontinuous dynamical systems \cite{PaulNBF}. Uncertain or disturbed systems, as addressed in this paper, fall into a particular class of differential inclusions.  As such, this section presents the high-level theory of differential inclusions, and later sections formulate the disturbed system that this work considers.

In general, differential inclusions are formulated as 
\begin{equation} 
    \label{eq:diffInc}
    \dot{x}(t) \in F(x(t)), x(0) = x_{0} ,
\end{equation}
where $F : \mathbb{R}^{n} \to 2^{\mathbb{R}^{n}}$ is an upper semi-continuous set-valued map that takes nonempty, convex, and compact values.  A set-valued map is called upper semi-continuous if for every $\epsilon > 0$, $x' \in \mathbb{R}^{n}$ there exists $\delta > 0$ such that 
\begin{equation}
    F(y) \subset F(x') + B(0, \epsilon), \forall y \in B(x', \delta) .
\end{equation}
Note that the term $F(\cdot) + B(0, \epsilon)$ is meant to be taken as a set-valued addition.  That is,
\begin{equation}
    F(\cdot) + B(0, \epsilon) = \{v + e : v \in F(\cdot), \|e\| \leq \epsilon\} .
\end{equation}

Under the given assumptions for $F$ in \eqref{eq:diffInc}, existence of a particular type of solution, a Carath\'eodory solution, may be guaranteed \cite{cortes2008}. A Carath\'eodory solution to \eqref{eq:diffInc} is an absolutely continuous trajectory $x$ : $[0, t_1] \to \mathbb{R}^n$ such that $\dot{x}(t) \in F(x(t))$ almost everywhere on $[0, t_{1}]$ and $x(0) = x_{0}$.  In this case, uniqueness can by no means be guaranteed.  For a more comprehensive survey of differential inclusions, see \cite{cortes2008discontinuous}.


Forward invariance in the context of differential inclusions typically admits two standard definitions, stemming from the nonuniqueness of solutions \cite{cortes2008discontinuous}. Specifically, weak invariance requires that at least one solution remains in the set for all time given an initial condition in $\mathcal{C}$.  On the other hand, strong invariance requests that given an initial condition in $\mathcal{C}$, all solutions stay in the set for all time as illustrated in Figure~\ref{fig:forward-invariance}. For the remainder of the paper, we will solely focus on strong invariance and simply refer to this quality as invariance.


\subsection{Barrier Functions for Differential Inclusions} 
\label{subsec:barrier-functions-for-differential-inclusions}

The work in \cite{PaulNBF} generalizes the result in \eqref{eq:barrier-certificate-reg} to nonsmooth barrier functions and differential inclusions.  However, in the case that the barrier function is smooth, the same result applies to a system described by a differential inclusion.  As such, this result becomes useful for this work, and it is subsequently stated in a form that has been modified to fit the terminology of this paper.  In the next section, we specialize this result for the purpose of robust control and apply it to validate CBFs for disturbed control systems.  Note that the results here are originally phrased for uncontrolled systems in \cite{PaulNBF}; however, for brevity, we still refer to barrier functions as CBFs, as the same results hold by considering the closed-loop system.

\begin{definition}{\cite[Definition~4]{PaulNBF}} 
    A locally Lipschitz function $h : \mathbb{R}^{n} \to \mathbb{R}$ is a valid \emph{Control Barrier Function (CBF)} for \eqref{eq:diffInc} if and only if $x_{0} \in \mathcal{C}$ implies that there exists a class-$\mathcal{KL}$ function $\beta : \mathbb{R}_{\geq 0} \times \mathbb{R}_{\geq 0} \to \mathbb{R}_{\geq 0}$ such that 
    \begin{equation}
        h(x(t)) \geq \beta(h(x_{0}), t), \forall t \in [0, t_{1}] ,
    \end{equation}
    for every Carath\'eodory solution $x : [0, t_{1}] \to \mathbb{R}^{n}$ starting from $x_{0}$.
\end{definition}

We will refer to functions $h$ satisfying the definition above as robust CBFs for the remainder of this paper.

\begin{theorem}{\cite[Theorem~2]{PaulNBF}}
    \label{thm:valid-cbf}
    Let $h : \mathbb{R}^{n} \to \mathbb{R}$ be a continuously differentiable function.  If there exists a locally Lipschitz extended class-$\mathcal{K}$ function $\alpha: \mathbb{R} \to \mathbb{R}$ such that 
    \begin{equation}
        \min \nabla h(x')^{\top}F(x') \geq - \alpha(h(x')), \forall x' \in \mathbb{R}^n,
    \end{equation}
    then $h$ is a valid robust CBF for \eqref{eq:diffInc}.
\end{theorem}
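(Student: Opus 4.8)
The plan is to reduce the set-valued condition to a scalar differential inequality along each solution and then close the argument with a comparison principle. First I would fix $x_0 \in \mathcal{C}$ and an arbitrary Carath\'eodory solution $x : [0, t_1] \to \R^n$ starting from $x_0$, and define $y(t) = h(x(t))$. Since $h$ is continuously differentiable and $x$ is absolutely continuous, $y$ is absolutely continuous, so $\dot y$ exists almost everywhere and the chain rule gives $\dot y(t) = \nabla h(x(t))^\top \dot x(t)$ for almost every $t$. Because $\dot x(t) \in F(x(t))$ almost everywhere, the value $\nabla h(x(t))^\top \dot x(t)$ belongs to the set $\nabla h(x(t))^\top F(x(t))$ and hence is at least its minimum, which by hypothesis is bounded below by $-\alpha(h(x(t)))$. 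This yields the almost-everywhere differential inequality $\dot y(t) \geq -\alpha(y(t))$ on $[0, t_1]$.

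Next I would introduce the scalar comparison system $\dot z(t) = -\alpha(z(t))$ with $z(0) = h(x_0)$. Since $\alpha$ is locally Lipschitz, this initial-value problem has a unique solution, and the standard comparison lemma for an absolutely continuous function obeying an almost-everywhere differential inequality (matched initial conditions $y(0) = z(0)$) gives $y(t) \geq z(t)$ for all $t \in [0, t_1]$. It therefore remains to exhibit a class-$\mathcal{KL}$ lower bound on $z$.

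To that end I would define $\beta(r, t)$ to be the value at time $t$ of the solution to $\dot z = -\alpha(z)$ started from $z(0) = r \geq 0$, and verify the class-$\mathcal{KL}$ properties directly. Because $\alpha(0) = 0$, the origin is an equilibrium, so $\beta(0, t) = 0$; uniqueness of solutions forbids trajectories from crossing, which yields strict monotonicity of $\beta(\cdot, t)$, while continuous dependence on initial conditions yields continuity, so $\beta(\cdot, t)$ is class-$\mathcal{K}$. For fixed $r \geq 0$, the sign condition $\alpha(z) \geq 0$ on $z \geq 0$ makes $\beta(r, \cdot)$ nonincreasing, and the usual asymptotic-stability argument (a bounded monotone trajectory converging to a limit $L > 0$ would force $\dot z \to -\alpha(L) < 0$, contradicting convergence) gives $\lim_{t \to \infty} \beta(r, t) = 0$. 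A point I would be careful about is that local Lipschitzness of $\alpha$ also rules out finite-time extinction: near $z = 0$ one has $\alpha(z) \leq K z$, so $z(t)$ is bounded below by a decaying exponential and stays strictly positive whenever $r > 0$, which is precisely what preserves strict monotonicity of $\beta$ in $r$ for every fixed $t$. Chaining $h(x(t)) = y(t) \geq z(t) = \beta(h(x_0), t)$ then matches the definition of a valid robust CBF.

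I expect the main obstacle to be the comparison step together with the class-$\mathcal{KL}$ verification, rather than the reduction itself. One must apply the comparison principle to an \emph{absolutely continuous} function satisfying the inequality only almost everywhere, not to a classical solution, and the delicate strict-monotonicity-in-$r$ property of $\beta$ hinges essentially on the local Lipschitz hypothesis on $\alpha$ (which simultaneously secures uniqueness of the comparison trajectory and precludes finite-time convergence to the origin)---a dependence that is easy to overlook.
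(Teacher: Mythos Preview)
The paper does not supply its own proof of this theorem; it is quoted verbatim from \cite[Theorem~2]{PaulNBF} and used as a black box for the subsequent corollaries and theorems. There is therefore nothing in the paper to compare your argument against.

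That said, your proposed proof is the standard route to results of this type and is essentially sound. The reduction to a scalar almost-everywhere inequality via the chain rule for the composition of a $C^1$ function with an absolutely continuous trajectory is correct, the invocation of a comparison lemma against the unique solution of $\dot z = -\alpha(z)$ is the right tool, and your construction of $\beta$ as the flow map of that scalar ODE is the canonical way to produce the class-$\mathcal{KL}$ bound. You are also right to flag the two genuine technicalities: the comparison principle must be stated for absolutely continuous functions satisfying the inequality only almost everywhere, and the local Lipschitz hypothesis on $\alpha$ is precisely what delivers both uniqueness of the comparison trajectory and the exclusion of finite-time extinction needed for strict monotonicity of $\beta(\cdot,t)$. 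One small tightening: you write that $\alpha(z)\geq 0$ on $z\geq 0$ makes $\beta(r,\cdot)$ \emph{nonincreasing}, but the paper's class-$\mathcal{KL}$ definition asks for $\beta(r,\cdot)$ to be \emph{decreasing}; for $r>0$ this follows from $\alpha(z)>0$ on $z>0$, which is part of the extended class-$\mathcal{K}$ hypothesis and which you already use implicitly in the asymptotic-stability step, so just state it there as well.
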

\todo{Note that the term $\nabla h(x')^{\top}F(x')$ in Theorem~\ref{thm:valid-cbf} contains a set-valued multiplication, which is defined as 
\begin{equation}
   \nabla h(x')^{\top}F(x') = \{\nabla h(x')^{\top} f : f \in F(x')\};
\end{equation}
moreover, the minimum from Theorem~\ref{thm:valid-cbf} is guaranteed to exist since $F$ is assumed to take nonempty and compact values.}

\begin{remark}
    In \cite{PaulNBF}, the gradient $\nabla h$ is replaced with the Clarke generalized gradient $\partial_{c} h : \mathbb{R}^{n} \to 2^{\mathbb{R}^{n}}$ (see \cite{clarke1990}) and $h$ is only assumed to be locally Lipschitz continuous.  However, in the case that $h$ is continuously differentiable, these two objects are equivalent.  Hence, Theorem~\ref{thm:valid-cbf} is equivalent to \cite[Theorem~2]{PaulNBF} in the context of this work.
\end{remark}

\begin{figure}
\centering
\includegraphics[height=0.41\linewidth]{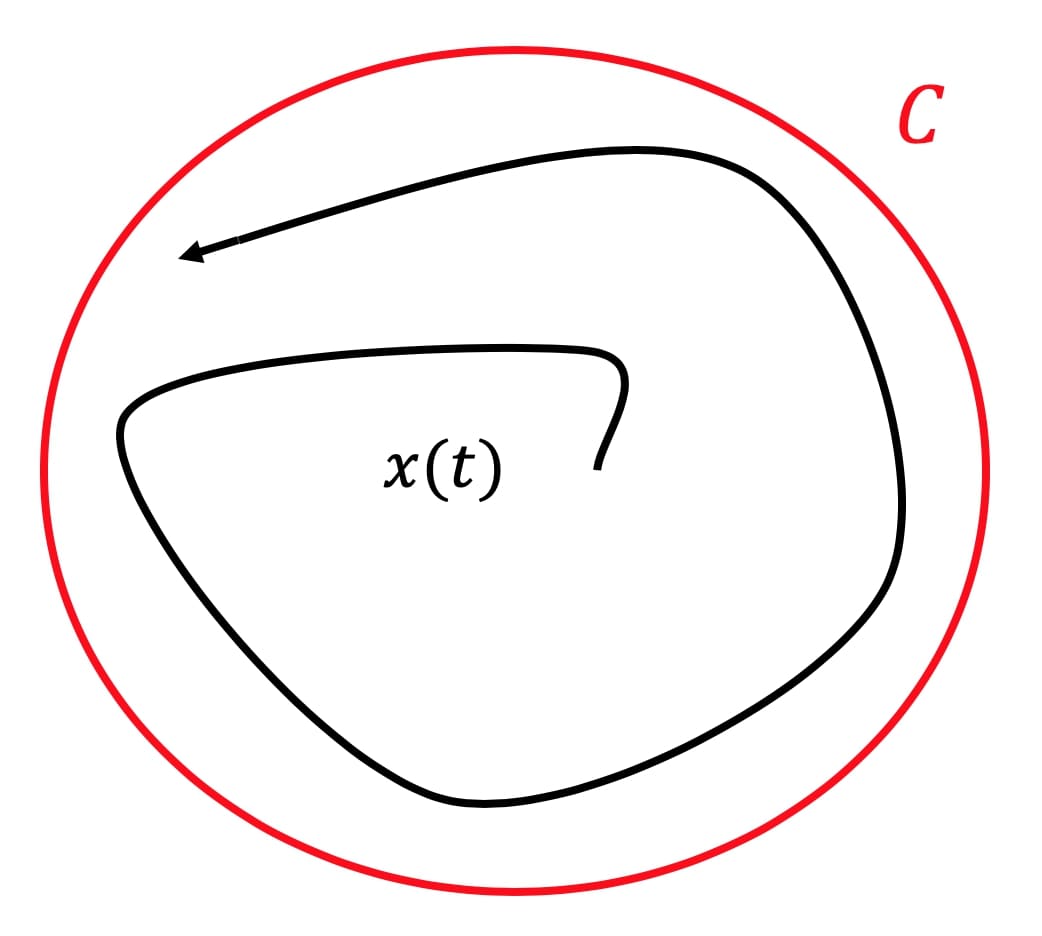} \hspace{3mm}
\includegraphics[height=0.41\linewidth]{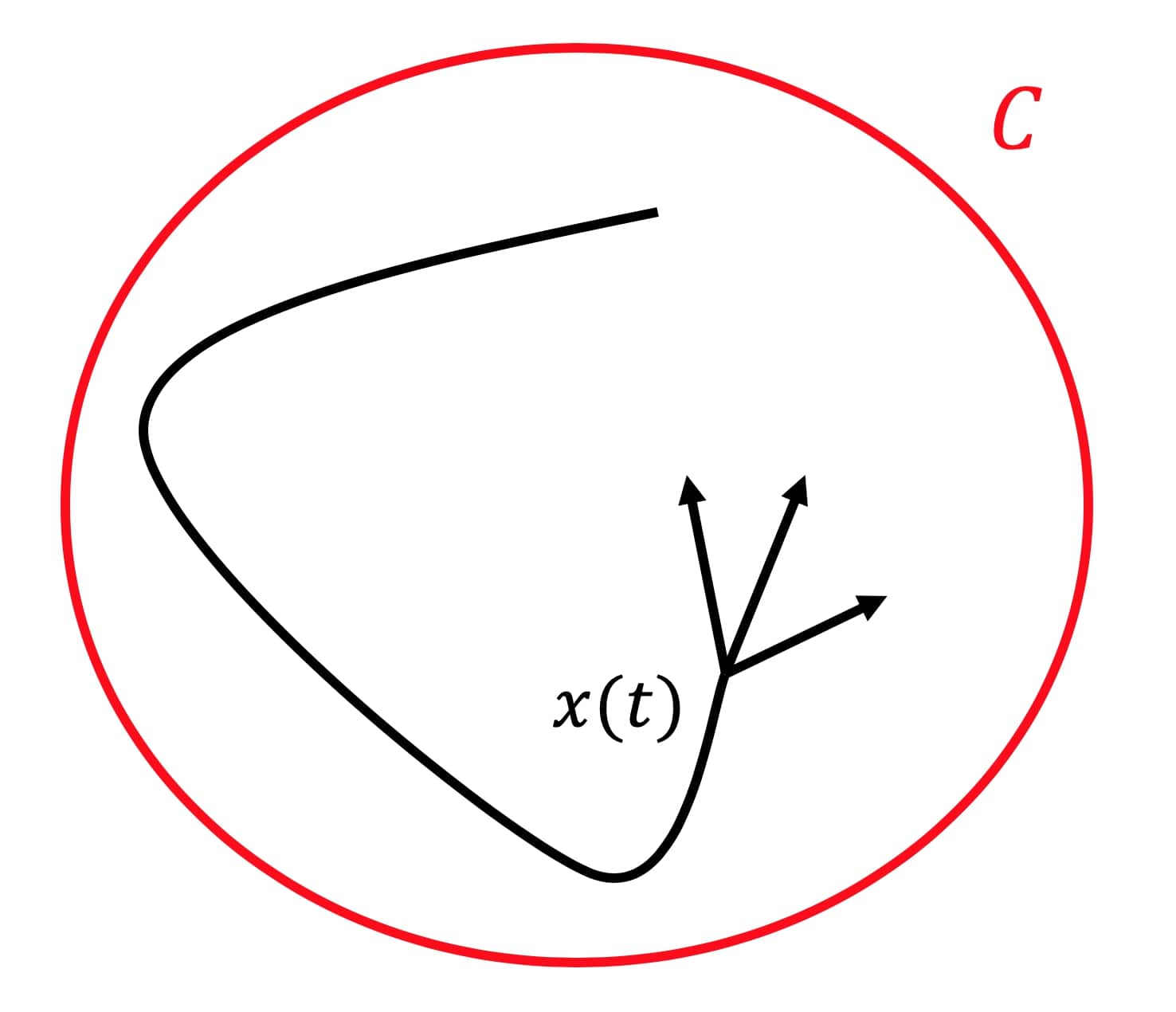}
\caption{Forward invariance of set $\mathcal{C}$ for systems modelled by ordinary differential equations (left) and differential inclusions (right). Note that in the case of differential inclusions, strong invariance indicates that all possible trajectories remain within the set $\mathcal{C}$.}
\label{fig:forward-invariance}
\end{figure}

\section{Barrier Functions for Disturbed Dynamical Systems}
\label{sec:barrier-functions-for-disturbed}
This section introduces a general approach that efficiently accounts for a large class of disturbances in dynamic models. Given the control-affine system \eqref{eq:control-affine}, the following two subsections will address disturbances on $f(x)$ and $g(x)$ respectively. Note that although we solely address these specific types of disturbances in the context of CBFs, the general approach can be applied to other frameworks (e.g., CLFs for guaranteeing stability \cite{galloway2015torque}). \todo{Moreover, although not addressed in this paper, the results presented in the remainder of this section can be straightforwardly extended to account for time-varying disturbances.}      

\subsection{Additive Disturbance}
\label{subsec:cbf_additive}
This subsection presents theoretical results from \cite{emam2019robust}, where the authors focus on disturbed control-affine systems that can be modelled through the following differential inclusion
\begin{equation} 
    \label{eq:control-affine-disturbed}
    \dot{x}(t) \in f(x(t)) + g(x(t))u(x(t)) + D_{A}(x(t)), x(0) = x_{0} ,
\end{equation}
where $D_A : \mathbb{R}^{n} \to 2^{\mathbb{R}^{n}}$ (the disturbance) is an upper semi-continuous set-valued map that takes nonempty, convex, and compact values; and $f$, $g$, $u$ are as in \eqref{eq:control-affine}. \todo{We present the following corollary that specializes the results from Theorem~\ref{thm:valid-cbf} to the dynamical system described by \eqref{eq:control-affine-disturbed}}. 

\todo{
\begin{corollary}
    \label{cor:disturbed-cbf}
    Let $h : \mathbb{R}^n \to \mathbb{R}$ be a continuously differentiable function.  If there exists a continuous function $u : \mathbb{R}^{n} \to \mathbb{R}^{m}$ and a locally Lipschitz extended class-$\mathcal{K}$ function $\alpha: \mathbb{R} \to \mathbb{R}$ such that 
    \begin{align}
        \begin{split}
            \min \nabla h(x')^{\top}(f(x') + g(x')u(x') + D_A(x')) \geq & \\
            - \alpha(h(x')), \forall x' \in \mathbb{R}^n &,
        \end{split}
    \end{align}
    then $h$ is a valid CBF for \eqref{eq:control-affine-disturbed}.
\end{corollary}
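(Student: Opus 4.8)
The plan is to reduce the statement directly to Theorem~\ref{thm:valid-cbf}. Once a particular continuous feedback $u$ is fixed, the disturbed closed-loop system \eqref{eq:control-affine-disturbed} is itself a differential inclusion of the form \eqref{eq:diffInc}, so I would define the closed-loop set-valued map
\begin{equation}
    F(x') = f(x') + g(x')u(x') + D_A(x'),
\end{equation}
so that \eqref{eq:control-affine-disturbed} reads $\dot x(t) \in F(x(t))$. With this identification the corollary's hypothesis becomes verbatim the condition $\min \nabla h(x')^{\top}F(x') \geq -\alpha(h(x'))$ required by Theorem~\ref{thm:valid-cbf}. Consequently, the only real work is to confirm that $F$ is an admissible right-hand side for \eqref{eq:diffInc}, namely that it is upper semi-continuous and takes nonempty, convex, and compact values; everything else is then immediate.

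The nonemptiness, convexity, and compactness of $F(x')$ transfer directly from the corresponding properties assumed for $D_A(x')$. Since $f$, $g$, and $u$ are continuous, the map $c(x') := f(x') + g(x')u(x')$ is single-valued, and for each fixed $x'$ the set $F(x') = c(x') + D_A(x')$ is merely a translation of $D_A(x')$ by the vector $c(x')$, which preserves all three properties. The step I expect to be the crux is upper semi-continuity of this Minkowski sum of a continuous single-valued map with an upper semi-continuous set-valued one. I would argue from the definition: fix $\epsilon > 0$ and $x'$; by continuity of $c$ pick $\delta_1$ with $\|c(y) - c(x')\| < \epsilon/2$ on $B(x', \delta_1)$, and by upper semi-continuity of $D_A$ pick $\delta_2$ with $D_A(y) \subset D_A(x') + B(0, \epsilon/2)$ on $B(x', \delta_2)$. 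Setting $\delta = \min\{\delta_1, \delta_2\}$, any $v \in F(y)$ decomposes as $v = c(y) + d$ with $d \in D_A(y)$, whence $v = (c(x') + d_0) + (e_1 + e_2)$ with $d_0 \in D_A(x')$ and $\|e_1 + e_2\| < \epsilon$, giving $v \in F(x') + B(0, \epsilon)$. Thus $F(y) \subset F(x') + B(0, \epsilon)$ on $B(x', \delta)$, establishing the claim.

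With $F$ confirmed to satisfy the standing assumptions on the right-hand side of \eqref{eq:diffInc}, the minimum appearing in the hypothesis is attained, since $\nabla h(x')^{\top}F(x')$ is the image of the compact set $F(x')$ under a continuous linear map and hence compact, exactly as noted following Theorem~\ref{thm:valid-cbf}. The corollary's inequality is then precisely the premise of that theorem applied to $F$, so invoking Theorem~\ref{thm:valid-cbf} yields that $h$ is a valid robust CBF for \eqref{eq:control-affine-disturbed}, completing the argument.
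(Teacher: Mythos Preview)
Your proposal is correct and follows exactly the route the paper intends: the corollary is stated in the paper as an immediate specialization of Theorem~\ref{thm:valid-cbf} to the closed-loop set-valued map $F(x') = f(x') + g(x')u(x') + D_A(x')$, without an explicit proof. You have simply filled in the regularity verification (nonempty, convex, compact values and upper semi-continuity of the Minkowski sum of a continuous singleton map with $D_A$) that the paper leaves implicit, and your $\epsilon/2$ argument for upper semi-continuity is the standard one.
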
}

We assume that $D_A$ can be represented as a  convex hull of $p > 0$ continuous functions $\psi_{i} : \mathbb{R}^{n} \to \mathbb{R}^{n}$, $i \in \{1, \hdots, p\}$ as
\begin{equation} 
    \label{eq:defPsi}
    D_A(x') = \co\Psi(x') = \co\{\psi_1(x')\ldots\psi_p(x')\}, \forall x' \in \mathbb{R}^{n} .
\end{equation}

\begin{lemma}
    \label{lem:usc-disturbance}
    Let $\psi_{i} : \mathbb{R}^{n} \to \mathbb{R}^{n}$, $i \in \{1, \hdots, p\}$ be a set of $p > 0$ continuous functions.  Then, $D_A : \mathbb{R}^{n} \to 2^{\mathbb{R}^{n}}$ defined as 
    \begin{equation}
        D_A(x') = \co \Psi(x') = \co \{\psi_{i}(x') : i \in \{1, \hdots, p\}\}, \forall x' \in \mathbb{R}^{n} ,
    \end{equation}
    is an upper semi-continuous set-valued map that takes compact, nonempty, and convex values.
\end{lemma}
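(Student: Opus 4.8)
The plan is to verify the two assertions of the lemma separately: first that $D_A(x')$ is nonempty, convex, and compact for each fixed $x'$, and then that $D_A$ is upper semi-continuous as a set-valued map.

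For the pointwise properties, fix $x' \in \mathbb{R}^n$. Nonemptiness is immediate, since $p > 0$ guarantees at least one generator $\psi_1(x')$. Convexity holds by definition, as $D_A(x')$ is a convex hull. For compactness, I would exhibit $D_A(x')$ as the image of a compact set under a continuous map: writing the standard simplex $\Delta = \{\theta \in \mathbb{R}^p : \sum_{i=1}^p \theta_i = 1,\ \theta_i \geq 0\}$, which is closed and bounded hence compact, the map $\theta \mapsto \sum_{i=1}^p \theta_i \psi_i(x')$ is continuous and its image is exactly $\co\Psi(x') = D_A(x')$. Since continuous images of compact sets are compact, $D_A(x')$ is compact.

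For upper semi-continuity, fix $x' \in \mathbb{R}^n$ and $\epsilon > 0$. Each $\psi_i$ is continuous, so there is $\delta_i > 0$ with $\|\psi_i(y) - \psi_i(x')\| \leq \epsilon$ whenever $y \in B(x', \delta_i)$. Taking $\delta = \min_i \delta_i > 0$, for any $y \in B(x', \delta)$ every generator satisfies $\psi_i(y) \in \psi_i(x') + B(0,\epsilon) \subset D_A(x') + B(0,\epsilon)$, where the last inclusion uses $\psi_i(x') \in D_A(x')$. The crucial step is then to promote this from the generators to their convex hull: because the Minkowski sum $D_A(x') + B(0,\epsilon)$ of two convex sets is itself convex, and it contains each $\psi_i(y)$, it must contain the smallest convex set generated by them, namely $\co\{\psi_i(y) : i \in \{1, \ldots, p\}\} = D_A(y)$. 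Hence $D_A(y) \subset D_A(x') + B(0,\epsilon)$ for all $y \in B(x', \delta)$, which is precisely the definition of upper semi-continuity.

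I expect the only nontrivial step to be this last containment argument: it is tempting to verify only the generators, but one must explicitly invoke convexity of the enlarged set $D_A(x') + B(0,\epsilon)$ to transfer the inclusion to the full convex hull $D_A(y)$. The compactness argument, while standard, similarly hinges on recognizing the polytope as a continuous image of the simplex rather than attempting to establish closedness and boundedness directly.
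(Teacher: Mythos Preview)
Your proof is correct and follows essentially the same approach as the paper's. The only notable differences are cosmetic: you justify compactness via the continuous image of the simplex (the paper simply invokes that the convex hull of finitely many points is compact), and you make explicit the step the paper leaves implicit---namely, that containment of the generators $\psi_i(y)$ in $D_A(x') + B(0,\epsilon)$ propagates to $D_A(y)$ because the Minkowski sum of convex sets is convex. The paper instead reduces to showing $\Psi(y) \subset \Psi(x') + B(0,\epsilon)$ and asserts this suffices, relying on the same convexity fact without stating it.
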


\begin{proof}
    Let $x' \in \mathbb{R}^{n}$.  Then, by definition, $D_A(x')$ is convex and nonempty, since $p > 0$.  Moreover, $D_A(x')$ is compact, since it is the convex hull of a finite number of points.  
    
    Now, it remains to show upper semi-continuity.  Let $\epsilon > 0$.  Because each $\psi_{i}$ is continuous, there exists corresponding $\delta_{i} > 0$ such that 
    \begin{equation}
        \|\psi_{i}(y) - \psi_{i}(x')\| \leq \epsilon, \forall y \in B(x', \delta_{i}) ,
    \end{equation}
    meaning that 
    \begin{equation}
        \psi_{i}(y) \in \psi_{i}(x') + B(0, \epsilon), \forall y \in B(x', \delta_{i}) .
    \end{equation}
    Set 
    \begin{equation}
        \delta = \min_{i} \delta_{i} ,
    \end{equation}
    which satisfies $\delta > 0$, because $p$ is finite.  Since
    \begin{equation}
    \label{eq:disturbance}
        D_A(\cdot) = \co \{\psi_{i}(\cdot) : i \in \{1, \hdots, p\}\} , 
    \end{equation}
    it now suffices to show that 
    \begin{align}
        \Psi(y) \subset \Psi(x') + B(0, \epsilon), \forall y \in B(x', \delta) .
    \end{align} 
    Let $i \in \{1, \hdots, p\}$ and consider $\psi_{i}(y)$.  By choice of $\delta$, 
    \begin{equation}
        \psi_{i}(y) \in \psi_{i}(x') + B(0, \epsilon) ,
    \end{equation}
    as such 
    \begin{equation}
        \psi_{i}(y) \in \Psi(x') + B(0, \epsilon) .
    \end{equation}
    Accordingly, $D_A$ is upper semi-continuous.
\end{proof}

Note that we will later show that this assumption is highly non-restrictive. Moreover, by leveraging the latter assumption along with Theorem~\ref{thm:valid-cbf}, one can efficiently ensure the validity of a CBF for the disturbed dynamical system \eqref{eq:control-affine-disturbed} by solely checking for the extreme points of the set $D_A$. The following theorem from \cite{emam2019robust} formalizes this result. 
\begin{theorem} 
    \label{thm:valid-cbf-disturbed}
    Let $h : \mathbb{R}^{n} \to \mathbb{R}$ be a continuously differentiable function.  Let $\psi_{i} : \mathbb{R}^{n} \to \mathbb{R}^{n}$, $i \in \{1, \hdots, p\}$ be a set of $p > 0$ continuous functions, and define the disturbance $D_A : \mathbb{R}^{n} \to 2^{\mathbb{R}^{n}}$ as 
    \begin{equation}
        D_A(x') = \co \Psi(x') = \co\{\psi_1(x')\ldots\psi_p(x')\}, \forall x' \in \mathbb{R}^{n} .
    \end{equation}
    If there exists a continuous function $u : \mathbb{R}^{n} \to \mathbb{R}^{m}$ and a locally Lipschitz extended class-$\mathcal{K}$ function $\alpha : \mathbb{R} \to \mathbb{R}$ such that 
    \begin{equation} 
        \label{eq:mainTheorem}
            \begin{split}
                & \nabla h(x')^{\top}(f(x') + g(x')u(x')) \geq \\
                & -\alpha(h(x')) - \min \nabla h(x')^{\top} \Psi(x'), \forall x' \in \mathbb{R}^{n} ,
            \end{split}
    \end{equation}
    then $h$ is a valid robust CBF for \eqref{eq:control-affine-disturbed}.
\end{theorem}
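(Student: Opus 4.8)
The plan is to reduce the theorem to Corollary~\ref{cor:disturbed-cbf} by showing that the displayed inequality \eqref{eq:mainTheorem} is simply a rearrangement of the corollary's hypothesis. First I would invoke Lemma~\ref{lem:usc-disturbance} to confirm that $D_A(x') = \co\Psi(x')$ is an upper semi-continuous set-valued map taking nonempty, convex, and compact values, so that \eqref{eq:control-affine-disturbed} is a well-posed differential inclusion and Corollary~\ref{cor:disturbed-cbf} (equivalently, Theorem~\ref{thm:valid-cbf}) is applicable. The set-valued minimum appearing in both statements is then guaranteed to exist.

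The crux is the convex-analytic identity
\begin{equation}
    \min \nabla h(x')^{\top} \co\Psi(x') = \min_{i \in \{1,\ldots,p\}} \nabla h(x')^{\top} \psi_i(x') = \min \nabla h(x')^{\top} \Psi(x') ,
\end{equation}
which expresses the elementary fact that a linear functional attains its minimum over the convex hull of finitely many points at one of those points. I would establish this via two inequalities. Since $\Psi(x') \subseteq \co\Psi(x')$, minimizing over the larger set can only decrease the value, giving ``$\leq$''. Conversely, every $v \in \co\Psi(x')$ can be written as $v = \sum_{i} \theta_i \psi_i(x')$ with $\theta_i \geq 0$ and $\sum_i \theta_i = 1$, so by linearity $\nabla h(x')^{\top} v = \sum_i \theta_i \nabla h(x')^{\top}\psi_i(x') \geq \min_j \nabla h(x')^{\top}\psi_j(x')$, which yields the reverse inequality.

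Next, because the affine term $f(x') + g(x')u(x')$ is a single vector that does not range over the disturbance set, it factors out of the set-valued minimization, giving
\begin{equation}
    \min \nabla h(x')^{\top}\!\big(f(x') + g(x')u(x') + D_A(x')\big) = \nabla h(x')^{\top}\!\big(f(x') + g(x')u(x')\big) + \min \nabla h(x')^{\top}\Psi(x') .
\end{equation}
Substituting this into \eqref{eq:mainTheorem} and moving the $\min \nabla h(x')^{\top}\Psi(x')$ term to the left shows precisely that the left-hand side above is bounded below by $-\alpha(h(x'))$ for all $x' \in \mathbb{R}^n$, which is exactly the hypothesis of Corollary~\ref{cor:disturbed-cbf}; the conclusion that $h$ is a valid robust CBF for \eqref{eq:control-affine-disturbed} then follows immediately. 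I expect the only nontrivial step to be the convex-hull minimization identity, and even that is a short standard argument; the remainder is bookkeeping that hinges on pulling the disturbance-independent drift-and-control term out of the minimum.
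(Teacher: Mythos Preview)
Your proposal is correct and follows essentially the same route as the paper's proof: invoke Lemma~\ref{lem:usc-disturbance} so that Corollary~\ref{cor:disturbed-cbf} applies, pull the drift-and-control term out of the set-valued minimum, and use the identity $\min \nabla h(x')^{\top}\co\Psi(x') = \min \nabla h(x')^{\top}\Psi(x')$ to reduce \eqref{eq:mainTheorem} to the corollary's hypothesis. The only cosmetic difference is that the paper cites this convex-hull minimization identity from \cite[Lemma~3]{PaulNBF}, whereas you supply the (short) direct argument yourself.
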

\begin{proof}
    \label{proof:valid-cbf-disturbed}
       \todo{By substituting the definition of $D_A(x')$ from \eqref{eq:defPsi} into the result from Corollary~\ref{cor:disturbed-cbf}, it follows that it must be shown that}
    \begin{align}
        \begin{split}
            & \min \nabla h(x')^{\top}(f(x') + g(x')u(x') + \co \Psi(x')) \geq \\
            & - \alpha(h(x')), \forall x' \in \mathbb{R}^{n} .
        \end{split}
    \end{align}
    \todo{By Lemma~\ref{lem:usc-disturbance}, $\co \Psi$ is an upper semi-continuous set-valued map that takes nonempty, convex, and compact values, so the results of Corollary~\ref{cor:disturbed-cbf} may be applied.} Note that, for any $x' \in \mathbb{R}^{n}$, the condition above is equivalent to
    \begin{align} 
        \begin{split}
            \label{eq:ineq1}
            & \nabla h(x')^{\top}(f(x') + g(x')u(x')) \geq \\
            & -\alpha(h(x')) - \min \nabla h(x')^{\top} \co \Psi(x') .
        \end{split}
    \end{align}
    We can then take advantage of the properties of the convex hull (see \cite[Lemma~3]{PaulNBF}) through the following equality
    \begin{equation} 
        \label{eq:minCo}
        \min \nabla h(x')^{\top} \text{co} \Psi(x') = \min \nabla h(x')^{\top} \Psi(x') .
    \end{equation}
    Thus, by substituting \eqref{eq:minCo} into \eqref{eq:ineq1}, we obtain \eqref{eq:mainTheorem}.
\end{proof}

Note that checking every disturbance in $\co \Psi(\cdot)$ is equivalent to only the extreme points of $\co \Psi(\cdot)$ (i.e., each $\psi_{i}(\cdot)$) which has a linear computational cost with respect to the size of the set $\Psi(x')$. 
\todo{Moreover, Theorem~\ref{thm:valid-cbf-disturbed} can be straightforwardly extended to the case where $D_A$ is the union of the convex hulls of a finite number of function-valued points, as shown by the following proposition.

\begin{proposition}
    \label{prop:union-of-convex-hulls}
    Define 
    \begin{equation} 
        \label{eq:Union}
        D_A(x') = \bigcup_{i=1}^{q} \co(\Psi_{i}(x')), \forall x' \in \mathbb{R}^{n}
    \end{equation}
    such that
    \begin{equation}
        \Psi_{i}(x') = \{\psi^i_{1}(x'), \hdots, \psi^{i}_{p_{i}}(x')\}, \forall i \in \{1, \hdots, q\}, 
    \end{equation}
    where each $\psi^{i}_{j}(\cdot)$ is continuous. Let $h : \mathbb{R}^{n} \to \mathbb{R}$ be a continuously differentiable function.  If there exists a continuous function $u : \mathbb{R}^{n} \to \mathbb{R}^{m}$ and a locally Lipschitz extended class-$\mathcal{K}$ function $\alpha : \mathbb{R} \to \mathbb{R}$ such that 
    \begin{align} 
        \begin{split}
            & \nabla h(x')^{\top}(f(x') + g(x')u(x')) \geq \\
            & -\alpha(h(x')) - \min \nabla h(x')^{\top} \Psi_i(x'), \forall i \in \{1, \hdots, q\} ,
        \end{split}
    \end{align}
    then $h$ is a valid CBF for \eqref{eq:control-affine-disturbed}.
\end{proposition}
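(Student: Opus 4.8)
The plan is to reduce Proposition~\ref{prop:union-of-convex-hulls} to the machinery already established in Corollary~\ref{cor:disturbed-cbf} and Theorem~\ref{thm:valid-cbf-disturbed}, the only genuinely new difficulty being that a finite union of convex hulls is in general \emph{non-convex} and therefore does not, as written, satisfy the convex-valued hypothesis demanded by the differential-inclusion framework of \eqref{eq:diffInc}. To circumvent this, I would first introduce the convexified disturbance $\bar D_A(x') = \co\big(\bigcup_{i=1}^{q} \Psi_i(x')\big)$, i.e. the convex hull of the flattened finite point set $\bigcup_{i=1}^{q} \Psi_i(x')$. Since this is the convex hull of finitely many continuous function-valued points, Lemma~\ref{lem:usc-disturbance} applies verbatim once the index set $\{1,\dots,q\}$ is replaced by the flattened index set $\{(i,j)\}$, so that $\bar D_A$ is upper semi-continuous and takes nonempty, convex, and compact values. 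Consequently Corollary~\ref{cor:disturbed-cbf} becomes applicable to the inflated inclusion $\dot x \in f + g u + \bar D_A(x)$.

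The core algebraic step is the min-over-union identity
\[
\min \nabla h(x')^{\top} \bar D_A(x') = \min_{i \in \{1,\dots,q\}} \min \nabla h(x')^{\top} \Psi_i(x'),
\]
whose left equality uses that the minimum of a linear functional over a union equals the minimum of the per-piece minima, while the passage to the extreme points $\Psi_i$ (dropping both the inner hulls $\co \Psi_i$ and the outer hull defining $\bar D_A$) invokes \cite[Lemma~3]{PaulNBF} exactly as in \eqref{eq:minCo}. The stated hypothesis, which is required to hold for every $i$, then reads $\nabla h^{\top}(f + g u) \geq -\alpha(h) - \min \nabla h^{\top}\Psi_i$ for all $i$; minimizing the right-hand side over $i$ and applying the identity above renders it equivalent to $\min \nabla h(x')^{\top}(f + g u + \bar D_A(x')) \geq -\alpha(h(x'))$, which by Corollary~\ref{cor:disturbed-cbf} certifies $h$ as a valid robust CBF for the inflated inclusion.

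It remains to close the gap between the inflated and the original system. Since $D_A(x') = \bigcup_{i} \co(\Psi_i(x')) \subseteq \co\big(\bigcup_{i} \Psi_i(x')\big) = \bar D_A(x')$ pointwise, every Carath\'eodory solution of the original union-valued inclusion \eqref{eq:control-affine-disturbed} satisfies $\dot x(t) \in f + g u + \bar D_A(x(t))$ almost everywhere, hence is also a solution of the inflated inclusion. Because the CBF property is a statement quantified over \emph{all} such solutions, its validity for the inflated system transfers immediately to the original union-valued system.

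I expect the main obstacle to be precisely this non-convexity: one must resist applying Theorem~\ref{thm:valid-cbf} directly to the non-convex $D_A$, and instead justify that inflating to $\bar D_A$ is both legitimate (the hypotheses of Lemma~\ref{lem:usc-disturbance} and Corollary~\ref{cor:disturbed-cbf} are verified) and lossless (the solution inclusion runs in the correct direction, and the relevant linear-functional minima coincide). Once the convexification is set up correctly, the remaining min-manipulations are routine.
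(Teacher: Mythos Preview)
Your proposal is correct and follows essentially the same idea as the paper: reduce to the extreme-point check via the identity $\min \nabla h^{\top} \co\bigl(\bigcup_i \Psi_i\bigr) = \min_i \min \nabla h^{\top} \Psi_i$, so that the $q$ per-piece inequalities of the hypothesis are exactly what is needed. The paper's own proof is a two-line sketch (``apply Theorem~\ref{thm:valid-cbf-disturbed} $q$ times'') that defers the non-convexity issue to Remark~\ref{rem:magic-method} and the paragraph following it; you have simply front-loaded and made rigorous that convexification step, which is a clean way to present the same argument.
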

\begin{proof} 
    The proposition is directly obtained by applying Theorem~\ref{thm:valid-cbf-disturbed} $q$ times.  Given that inequality \eqref{eq:mainTheorem} holds for each $D_i(x') = \co \Psi_i(x')$ then it follows that it holds for $ D_A(x')~=~\bigcup_{i=1}^{q} D_i(x')$.
\end{proof}
\begin{remark}
    \label{rem:magic-method}
    The proof above illustrates the fact that $h$ is a valid CBF for 
    \begin{equation}
        D_A(x') = \bigcup_{i=1}^{q} \co \Psi_{i}(x'), \forall x' \in \mathbb{R}^{n} 
    \end{equation}
    if and only if it is also valid for
    \begin{equation}
        D_A(x') = \co \bigcup\limits_{i=1}^{q} \Psi_{i}(x'), \forall x' \in \mathbb{R}^{n}.
    \end{equation}
This is because in both cases, all the extreme points of $\bigcup_{i=1}^{q} \Psi_{i}(x')$ are accounted for: either through multiple constraints as in Proposition~\ref{prop:union-of-convex-hulls} for $D_A(x') = \bigcup_{i=1}^{q} \co \Psi_{i}(x')$ or through a single constraint as in Theorem~\ref{thm:valid-cbf-disturbed} for $D_A(x')~=~\co~\bigcup_{i=1}^{q} \Psi_{i}(x')$.
\end{remark}

The nonconvexity of $D_A$, as given in Proposition~\ref{prop:union-of-convex-hulls}, may appear to pose a problem, as the sufficient conditions for the existence of solutions to a differential inclusion requires that the set-valued map takes convex values (see \eqref{eq:diffInc}).  However, as noted by Remark~\ref{rem:magic-method}, $D_A$ may be equivalently defined as 
\begin{equation}
    D_A(x') = \co \bigcup\limits_{i=1}^{q} \Psi_{i}(x'), \forall x' \in \mathbb{R}^{n} ,
\end{equation}
which is indeed convex.} Describing disturbances as a finite union of convex sets encodes a very wide class of disturbances.  However, Remark~\ref{rem:magic-method} implies that considering this nonconvex disturbance is actually completely equivalent to considering the convex hull of the disturbance.  As such, this result indicates that utilizing convex hulls to approximate a disturbance equivalently addresses a wide class of nonconvex disturbances.


\subsection{Multiplicative Disturbance}
\label{subsec:cbf_mult}
In this subsection, we present results for the case where the disturbance affects the value of $g(x)$ in \eqref{eq:control-affine}, and thus it multiplies the control. Consider the following dynamical system
\begin{equation} 
    \label{eq:control-affine-disturbed-mult}
    \dot{x}(t) \in f(x(t)) + (g(x(t)) + D_M(x(t)))u(x(t)), x(0) = x_{0}, 
\end{equation}
where $D_M : \mathbb{R}^{n} \to 2^{\mathbb{R}^{n \times m}}$ is an upper semi-continuous set-valued map that takes nonempty, convex, and compact values. Similarly to the previous subsection, we also assume that the disturbance is a convex hull of $p > 0$ continuous functions $i \in \{1, \hdots, p\}$ as
\begin{equation} 
    \label{eq:defPsi-mult}
    D_M(x') = \co\Psi(x') = \co\{\psi_1(x')\ldots\psi_p(x')\}, \forall x' \in \mathbb{R}^{n} .
\end{equation}

\todo{
\begin{lemma}
    \label{lem:usc-disturbance-m}
    Let $\psi_{i} : \mathbb{R}^{n} \to \mathbb{R}^{n \times m}$, $i \in \{1, \hdots, p\}$ be a set of $p > 0$ continuous functions.  Then, $D_M : \mathbb{R}^{n} \to 2^{\mathbb{R}^{n\times m}}$ defined as 
    \begin{equation}
        D_M(x') = \co \Psi(x') = \co \{\psi_{i}(x') : i \in \{1, \hdots, p\}\}, \forall x' \in \mathbb{R}^{n} ,
    \end{equation}
    is an upper semi-continuous set-valued map that takes compact, nonempty, and convex values.
\end{lemma}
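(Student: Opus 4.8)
The plan is to follow the proof of Lemma~\ref{lem:usc-disturbance} essentially verbatim, since the only structural difference is that the functions $\psi_i$ now take values in the matrix space $\R^{n\times m}$ rather than in $\R^n$. Because $\R^{n\times m}$ is a finite-dimensional normed vector space (isomorphic to $\R^{nm}$, equipped with, say, the Frobenius norm), every topological fact invoked in the vector-valued case --- compactness of finite convex hulls, the continuity estimates, and the behavior of the $\epsilon$-ball $B(0,\epsilon)$ --- transfers unchanged. Throughout, I would fix the norm $\norm{\cdot}$ and the ball $B(0,\epsilon)$ to be those of $\R^{n\times m}$.

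First, I would fix $x'\in\R^n$ and dispatch the easy properties. The value $D_M(x')=\co\{\psi_1(x'),\ldots,\psi_p(x')\}$ is convex by construction and nonempty because $p>0$; it is compact because it is the convex hull of finitely many points in a finite-dimensional space. These three claims require no computation beyond citing the standard characterization of finite convex hulls.

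The substantive step is upper semi-continuity. Given $\epsilon>0$, continuity of each $\psi_i$ supplies $\delta_i>0$ with $\norm{\psi_i(y)-\psi_i(x')}\le\epsilon$ for all $y\in B(x',\delta_i)$, and setting $\delta=\min_i\delta_i>0$ (finite since $p$ is finite) yields $\psi_i(y)\in\psi_i(x')+B(0,\epsilon)$ for every $i$ and every $y\in B(x',\delta)$. Hence $\Psi(y)\subset\Psi(x')+B(0,\epsilon)$ on $B(x',\delta)$. To conclude, I would pass to convex hulls: taking $\co$ of both sides and using that $B(0,\epsilon)$ is itself convex gives $\co\Psi(y)\subset\co(\Psi(x')+B(0,\epsilon))=\co\Psi(x')+B(0,\epsilon)$, i.e.\ $D_M(y)\subset D_M(x')+B(0,\epsilon)$, which is precisely the upper semi-continuity condition.

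I do not anticipate a genuine obstacle here, as the argument is a direct transcription of Lemma~\ref{lem:usc-disturbance}. The only point meriting a line of justification is the claim that the finite-dimensionality and normed-space structure used implicitly in that lemma survive the move from $\R^n$ to $\R^{n\times m}$ --- which they do, since matrix space is just $\R^{nm}$ in disguise.
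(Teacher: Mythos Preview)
Your proposal is correct and is exactly what the paper intends: Lemma~\ref{lem:usc-disturbance-m} is stated in the paper without proof, and the implicit justification is that the argument for Lemma~\ref{lem:usc-disturbance} carries over verbatim once $\R^n$ is replaced by the finite-dimensional space $\R^{n\times m}$. Your write-up is in fact slightly more explicit than the paper's proof of Lemma~\ref{lem:usc-disturbance}, since you spell out the passage from $\Psi(y)\subset\Psi(x')+B(0,\epsilon)$ to $\co\Psi(y)\subset\co\Psi(x')+B(0,\epsilon)$ via convexity of the ball, whereas the paper simply asserts that the former ``suffices.''
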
}
\noindent
The following corollary \todo{of Theorem~\ref{thm:valid-cbf}} states a sufficient condition for a function $h$ to be a valid robust CBF for \eqref{eq:control-affine-disturbed-mult}.

\begin{corollary}
    \label{prop:disturbed-cbf-mult}
    Let $h : \mathbb{R}^n \to \mathbb{R}$ be a continuously differentiable function.  If there exists a continuous function $u : \mathbb{R}^{n} \to \mathbb{R}^{m}$ and a locally Lipschitz extended class-$\mathcal{K}$ function $\alpha: \mathbb{R} \to \mathbb{R}$ such that 
    \begin{align}
    \label{eq:multiplicative-1}
        \begin{split}
            & \nabla h(x')^{\top}(f(x') + g(x')u(x')) \\
            & + \min (\nabla h(x')^{\top} D_M(x') u(x')) \geq  -\alpha(h(x'))  , \forall x' \in \mathbb{R}^{n}.
        \end{split}
    \end{align}
    then $h$ is a valid robust CBF for \eqref{eq:control-affine-disturbed}.
\end{corollary}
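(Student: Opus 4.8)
The plan is to recognize Corollary~\ref{prop:disturbed-cbf-mult} as a direct instance of Theorem~\ref{thm:valid-cbf}, applied to the closed-loop differential inclusion obtained by substituting the continuous feedback $u$. Concretely, I would define the set-valued map $F : \mathbb{R}^{n} \to 2^{\mathbb{R}^{n}}$ by
\[
    F(x') = f(x') + g(x')u(x') + D_M(x')u(x'),
\]
where $D_M(x')u(x') = \{M u(x') : M \in D_M(x')\}$ denotes the set-valued matrix--vector product. With this definition, the disturbed system \eqref{eq:control-affine-disturbed-mult} is exactly $\dot{x}(t) \in F(x(t))$, so it suffices to verify the hypotheses of Theorem~\ref{thm:valid-cbf} for $F$ and then to show that the barrier inequality $\min \nabla h(x')^{\top}F(x') \geq -\alpha(h(x'))$ collapses to \eqref{eq:multiplicative-1}.

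First I would confirm that $F$ takes nonempty, convex, and compact values. For each fixed $x'$, the map $M \mapsto M u(x')$ is linear, so $D_M(x')u(x')$ is the image under a linear map of the nonempty, convex, and compact set $D_M(x')$, hence is itself nonempty, convex, and compact; adding the single point $f(x') + g(x')u(x')$ preserves all three properties. The harder step is upper semi-continuity of $x \mapsto D_M(x)u(x)$, which is where I expect the main obstacle. Here one must combine the assumed upper semi-continuity of $D_M$ with the continuity of $u$: given $M \in D_M(y)$ for $y$ near $x'$, write $M = M' + E$ with $M' \in D_M(x')$ and $\norm{E}$ small, so that $Mu(y) = M'u(x') + M'(u(y)-u(x')) + Eu(y)$, and then bound the two error terms using the compactness (hence boundedness) of $D_M(x')$ together with the local boundedness of $u$ near $x'$. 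A cleaner route, when $D_M$ is given as the convex hull of the continuous functions $\psi_i$ of Lemma~\ref{lem:usc-disturbance-m}, is to observe that $D_M(x)u(x) = \co\{\psi_1(x)u(x), \ldots, \psi_p(x)u(x)\}$ with each $\psi_i(x)u(x)$ continuous, so upper semi-continuity follows immediately from Lemma~\ref{lem:usc-disturbance}.

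Having established that Theorem~\ref{thm:valid-cbf} applies, the final step is the min computation. Since $\nabla h(x')^{\top}\,\cdot\,$ is linear and $F(x')$ is compact, the image is compact and the minimum exists. Distributing,
\[
    \nabla h(x')^{\top}F(x') = \nabla h(x')^{\top}\big(f(x')+g(x')u(x')\big) + \nabla h(x')^{\top} D_M(x')u(x'),
\]
and because the first term is a singleton it passes outside the minimization, giving
\[
    \min \nabla h(x')^{\top}F(x') = \nabla h(x')^{\top}\big(f(x')+g(x')u(x')\big) + \min \nabla h(x')^{\top} D_M(x')u(x').
\]
Thus the condition $\min \nabla h(x')^{\top}F(x') \geq -\alpha(h(x'))$ of Theorem~\ref{thm:valid-cbf} is precisely the hypothesis \eqref{eq:multiplicative-1}, and the conclusion that $h$ is a valid robust CBF follows. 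In short, the proof is structural rather than computational: the only nontrivial work is verifying that the multiplicative disturbance term yields a well-posed differential inclusion, after which the result is immediate from Theorem~\ref{thm:valid-cbf}.
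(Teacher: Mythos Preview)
Your proposal is correct and matches the paper's intent: the paper does not spell out a proof of this corollary at all, presenting it simply as a corollary ``of Theorem~\ref{thm:valid-cbf},'' and your argument fills in exactly the structural step one expects---form the closed-loop set-valued map $F(x') = f(x') + g(x')u(x') + D_M(x')u(x')$, check it meets the standing assumptions, and observe that the min-inequality of Theorem~\ref{thm:valid-cbf} is precisely \eqref{eq:multiplicative-1}. Your attention to the upper semi-continuity of $x \mapsto D_M(x)u(x)$ (via either the direct $\epsilon$--$\delta$ estimate or the convex-hull route through Lemma~\ref{lem:usc-disturbance}) is a point the paper leaves implicit, so in that sense your write-up is more complete than what appears in the text.
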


Note that as opposed to the additive disturbance case, the term $\min (\nabla h(x')^{\top}  D_M(x') u(x'))$ is non-linear with respect to the control signal. However, similarly to \cite{emam2019robust}, we wish to embed this constraint into a real-time controller synthesis framework---which requires the control input to be generated at a frequency greater than $100$Hz---via Quadratic Programming (QP).  In turn, QPs require the inequality constraints to be affine with respect to $u(x')$. To achieve this requirement, \eqref{eq:multiplicative-1} may be equivalently represented as $p$ constraints as presented in the following theorem.   

\begin{theorem} 
\label{thm:valid-cbf-disturbed-m}
Let $h : \mathbb{R}^{n} \to \mathbb{R}$ be a continuously differentiable function.  Let $\psi_{i} : \mathbb{R}^{n} \to \mathbb{R}^{n \times m}$, $i \in \{1, \hdots, p\}$ be a set of $p > 0$ continuous functions, and define the disturbance $D_M : \mathbb{R}^{n} \to 2^{\mathbb{R}^{n \times m}}$ as 
\begin{equation}
    D_M(x') = \co \Psi(x') = \co\{\psi_1(x')\ldots\psi_p(x')\}, \forall x' \in \mathbb{R}^{n} .
\end{equation}
If there exists a continuous function $u : \mathbb{R}^{n} \to \mathbb{R}^{m}$ and a locally Lipschitz extended class-$\mathcal{K}$ function $\alpha : \mathbb{R} \to \mathbb{R}$ such that 
\begin{equation} 
    \label{eq:mainTheorem-m}
        \begin{split}
            & \nabla h(x')^{\top}(f(x') + (g(x') +  \psi_i(x')) u(x')) \geq \\
            & -\alpha(h(x')), \forall x' \in \mathbb{R}^{n} , \forall i \in \{1,\ldots, p\},
        \end{split}
\end{equation}
then $h$ is a valid robust CBF for \eqref{eq:control-affine-disturbed-mult}.
\end{theorem}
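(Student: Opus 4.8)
The plan is to reduce the $p$ affine constraints \eqref{eq:mainTheorem-m} to the single sufficient condition \eqref{eq:multiplicative-1} of Corollary~\ref{prop:disturbed-cbf-mult} and then invoke that corollary. First I would observe that, by Lemma~\ref{lem:usc-disturbance-m}, the map $D_M = \co\Psi$ is upper semi-continuous and takes nonempty, convex, and compact values; consequently the differential inclusion \eqref{eq:control-affine-disturbed-mult} is well-posed and the hypotheses of Corollary~\ref{prop:disturbed-cbf-mult} are in force. The entire argument then parallels the template used for the additive case in Theorem~\ref{thm:valid-cbf-disturbed}, with the convex-hull simplification applied to a $u$-dependent linear functional rather than to $\nabla h^{\top}\Psi$ directly.

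The core step is a linearity/extreme-point argument. Fix $x' \in \mathbb{R}^{n}$ and write the set-valued product explicitly as $\nabla h(x')^{\top} D_M(x') u(x') = \{\nabla h(x')^{\top} M u(x') : M \in \co\Psi(x')\}$. For any $M = \sum_{i=1}^{p} \theta_i \psi_i(x')$ with $\theta_i \geq 0$ and $\sum_i \theta_i = 1$, the scalar $\nabla h(x')^{\top} M u(x') = \sum_{i} \theta_i\, \nabla h(x')^{\top} \psi_i(x') u(x')$ is a convex combination of the $p$ scalars $\nabla h(x')^{\top} \psi_i(x') u(x')$. Because the map $M \mapsto \nabla h(x')^{\top} M u(x')$ is linear, its minimum over the convex hull is attained at an extreme point $\psi_i(x')$, giving
\[
\min \nabla h(x')^{\top} D_M(x') u(x') = \min_{i \in \{1,\ldots,p\}} \nabla h(x')^{\top} \psi_i(x') u(x').
\]
This identity plays the same role here that \cite[Lemma~3]{PaulNBF} (equation \eqref{eq:minCo}) plays in the additive setting.

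With this identity established, the remainder is bookkeeping. The hypothesis \eqref{eq:mainTheorem-m} holds for every $i \in \{1,\ldots,p\}$, so taking the minimum over $i$ yields $\nabla h(x')^{\top}(f(x') + g(x')u(x')) + \min_{i} \nabla h(x')^{\top} \psi_i(x') u(x') \geq -\alpha(h(x'))$ for all $x'$; substituting the displayed identity recovers precisely \eqref{eq:multiplicative-1}. Applying Corollary~\ref{prop:disturbed-cbf-mult} then certifies that $h$ is a valid robust CBF for the multiplicative system \eqref{eq:control-affine-disturbed-mult}.

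The main obstacle is the extreme-point step: one must verify that the control $u(x')$ factors cleanly through the convex combination, i.e.\ that $M \mapsto \nabla h(x')^{\top} M u(x')$ is genuinely linear in $M$ so the minimum migrates to a vertex $\psi_i(x')$. This is exactly what permits the otherwise $u$-nonlinear term in \eqref{eq:multiplicative-1} to be expressed as $p$ constraints that are each affine in $u$, which is the property needed for the downstream QP-based controller synthesis. Everything else is routine and mirrors Theorem~\ref{thm:valid-cbf-disturbed}.
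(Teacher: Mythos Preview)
Your proposal is correct and follows essentially the same approach as the paper: both invoke Lemma~\ref{lem:usc-disturbance-m} for the regularity of $D_M$, establish the identity $\min \nabla h(x')^{\top}\co\Psi(x')\,u(x') = \min_i \nabla h(x')^{\top}\psi_i(x')\,u(x')$ via the extreme-point property of linear functionals over convex hulls, and then reduce to Corollary~\ref{prop:disturbed-cbf-mult}. Your write-up is in fact slightly more explicit than the paper's about why linearity in $M$ forces the minimum onto a vertex, but the logical structure is identical.
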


\begin{proof}
    \label{proof:valid-cbf-disturbed-mult}
    We begin by substituting the definition of $D_M(x')$ from \eqref{eq:defPsi-mult} into \eqref{eq:multiplicative-1}.  In particular, we need to show that
    \begin{equation}
        \begin{aligned}
            & \nabla h(x')^{\top}(f(x') + g(x')u(x')) \\
            & + \min (\nabla h(x')^{\top} \co \Psi(x') u(x')) \geq  -\alpha(h(x'))  , \forall x' \in \mathbb{R}^{n}.
        \end{aligned}
        \label{eq:ineq1}
    \end{equation}
   Note that by \todo{Lemma~\ref{lem:usc-disturbance-m}}, $D_M$ as defined above satisfies the required properties of \eqref{eq:control-affine-disturbed-mult}. We leverage the following property of convex hulls
\begin{equation}
\begin{aligned} 
    \min \nabla h(x')^{\top} \text{co} \Psi(x') u(x') &= \min \co \nabla h(x')^{\top} \Psi(x') u(x') \\
    &= \min \nabla h(x')^{\top} \Psi(x') u(x').
\end{aligned}
\label{eq:minCo}
\end{equation}
    Substituting \eqref{eq:minCo} into \eqref{eq:ineq1} yields 
    \begin{align}
        \begin{split}
        \label{eq:ineq2}
            & \nabla h(x')^{\top}(f(x') + g(x')u(x')) \\
            & + \min (\nabla h(x')^{\top} \Psi(x') u(x')) \geq  -\alpha(h(x'))  , \forall x' \in \mathbb{R}^{n},
        \end{split}
    \end{align}    
    which is equivalent to \eqref{eq:mainTheorem-m}.
\end{proof}

Moreover, similarly to the additive disturbance case, accounting for disturbances $D_M$ modelled as unions of convex hulls  follows straightforwardly from Theorem~\ref{thm:valid-cbf-disturbed-m}.
\begin{remark}
    \label{rem:magic-method-m}
    The function $h$ is a valid robust CBF for 
    \begin{equation}
        D_M(x') = \bigcup_{i=1}^{q} \co \Psi_{i}(x'), \forall x' \in \mathbb{R}^{n}
    \end{equation}
    if and only if it is also valid for
    \begin{equation}
        D_M(x') = \co \bigcup\limits_{i=1}^{q} \Psi_{i}(x'), \forall x' \in \mathbb{R}^{n}.
    \end{equation}
\end{remark}

\subsection{Matrices of convex hulls}
In addition to Theorems~\ref{thm:valid-cbf-disturbed}~and~\ref{thm:valid-cbf-disturbed-m}, a common case when dealing with data-driven disturbance methods is to obtain a disturbance set whose entries are convex hulls, which is a different representation from the one assumed in the theorems (convex hulls of a finite set of points). In this subsection, we show how \todo{the proposed} framework can be extended to account for such cases.   
\subsubsection{Additive Disturbance}
In the additive disturbance case, the aforementioned disturbance set is defined as  $D_{A} : \mathbb{R}^{n} \to 2^{\mathbb{R}^{n}}$ whose entries are convex hulls
\begin{equation}
\label{eq:d_est-a}
    [D_{A}(x')]_{i} = \co\{a^1_{i}(x'), a^2_{i}(x')\} = [a^1_{i}(x'), a^2_{i}(x')],
\end{equation}
where $a^1_{i}: \mathbb{R}^{n} \to \mathbb{R}$ and $a^2_{i}: \mathbb{R}^{n} \to \mathbb{R}$ are continuous functions.  Note that this is a different representation than \eqref{eq:defPsi} used by Theorem~\ref{thm:valid-cbf-disturbed} where the disturbance is the convex hull of a set of points. A straightforward way to obtain the desired form is by defining $\Psi$ as the $2^{n}$ vectors generated by permuting the entries of $D_{A}$ from \eqref{eq:d_est-a}. For example, consider the case where $n = 2$, then the corresponding disturbance ${D}_{A}$ would be defined as
\begin{equation*}
\begin{split}
D_{A}(x') = \co \left\{ 
\begin{bmatrix}
a^1_{1}(x') \\
a^1_{2}(x')
\end{bmatrix},
\begin{bmatrix}
a^1_{1}(x') \\
a^2_{2}(x')
\end{bmatrix},
\begin{bmatrix}
a^2_{1}(x') \\
a^1_{2}(x')
\end{bmatrix},
\begin{bmatrix}
a^2_{1}(x') \\
a^2_{2}(x')
\end{bmatrix}
\right\}.
\end{split}
\end{equation*}

\subsubsection{Multiplicative Disturbance}
In the case of the multiplicative disturbance, the disturbance consisting of matrix whose entries are convex hulls is defined as $D_{M} : \mathbb{R}^{n} \to 2^{\mathbb{R}^{n \times m}}$ where
\begin{equation}
\label{eq:d_est-m}
    [D_{M}(x')]_{ij} = \co\{a^1_{ij}(x'), a^2_{ij}(x')\} = [a^1_{ij}(x'), a^2_{ij}(x')], 
\end{equation}
and $a^1_{ij}: \mathbb{R}^{n} \to \mathbb{R}$ and $a^2_{ij}: \mathbb{R}^{n} \to \mathbb{R}$ are continuous functions.  Note that as opposed to the additive disturbance case, the permutation of the entries of $D_{M}$ would result in $2^{nm}$ matrices. As such, a direct application of Theorem~~\ref{thm:valid-cbf-disturbed-m} would yield $2^{nm}$ constraints, which may be a prohibitively large number for some systems. For example, in the case of a differential-drive robot with $n = 3$ and $m = 2$, this would result in $64$ constraints. To remedy this, we state the following proposition that allows us to reduce the number of constraints to $2^m$ (which corresponds to only $4$ constraints for the differential-drive robot example).

\begin{proposition} 
\label{prop:valid-cbf-disturbed-m2}
Let $h : \mathbb{R}^{n} \to \mathbb{R}$ be a continuously differentiable function.  Let $\Phi(x') = \{\phi_i(x')\}^{2^m}_{i=1}$ be the set of vertices of the $m$-orthotope defined by 
$$\nabla h(x')^{\top} D_{M}(x'),$$
where $D_{M}(x')$ is defined as in \eqref{eq:d_est-m}. 
If there exists a continuous function $u : \mathbb{R}^{n} \to \mathbb{R}^{m}$ and a locally Lipschitz extended class-$\mathcal{K}$ function $\alpha : \mathbb{R} \to \mathbb{R}$ such that 
\begin{equation} 
    \label{eq:mainTheorem-m2}
        \begin{split}
            & \nabla h(x')^{\top}(f(x') + g(x') u(x')) + \phi_i(x') u(x')\geq \\
            & -\alpha(h(x')), \forall x' \in \mathbb{R}^{n} , \forall i \in \{1,\ldots, 2^m\},
        \end{split}
\end{equation}
then $h$ is a valid robust CBF for \eqref{eq:control-affine-disturbed-mult}.
\end{proposition}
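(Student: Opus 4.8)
The plan is to reduce the statement to Corollary~\ref{prop:disturbed-cbf-mult}, whose sufficient condition \eqref{eq:multiplicative-1} involves the set-valued term $\min(\nabla h(x')^{\top} D_M(x') u(x'))$. First I would verify that the box-valued map $D_M$ defined in \eqref{eq:d_est-m} satisfies the hypotheses required of a disturbance in \eqref{eq:control-affine-disturbed-mult}: each value is a Cartesian product of the compact intervals $[a^1_{ij}(x'), a^2_{ij}(x')]$, hence nonempty, convex, and compact, while upper semi-continuity follows from the continuity of the $a^k_{ij}$ exactly as in Lemma~\ref{lem:usc-disturbance-m}. With these properties in hand Corollary~\ref{prop:disturbed-cbf-mult} applies, and it remains only to rewrite its condition in terms of the orthotope vertices $\phi_i$.

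The crux is a structural observation about $\nabla h(x')^{\top} D_M(x')$. Fix $x'$ and write $M \in D_M(x')$ with $M_{ij} \in [a^1_{ij}(x'), a^2_{ij}(x')]$. The $j$-th entry of the row vector $\nabla h(x')^{\top} M$ equals $\sum_{i=1}^{n} [\nabla h(x')]_i M_{ij}$, which depends only on the $j$-th column of $M$. Because $D_M(x')$ is a product of intervals, its columns vary independently; consequently each of the $m$ entries of $\nabla h(x')^{\top} M$ ranges over its own interval independently of the others, so the image $\nabla h(x')^{\top} D_M(x')$ is exactly an $m$-orthotope in $\mathbb{R}^{m}$. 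This is the step I expect to be the main obstacle: a generic linear image of a matrix box is only a zonotope, and it is the column-separable structure of the map $M \mapsto \nabla h(x')^{\top} M$ that collapses it to a box with precisely $2^m$ vertices, namely $\Phi(x') = \{\phi_i(x')\}_{i=1}^{2^m}$. I would make this precise by exhibiting the endpoints of the $j$-th interval as $\sum_i \min\{[\nabla h(x')]_i a^1_{ij}(x'), [\nabla h(x')]_i a^2_{ij}(x')\}$ and $\sum_i \max\{[\nabla h(x')]_i a^1_{ij}(x'), [\nabla h(x')]_i a^2_{ij}(x')\}$.

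Given this, the final step is routine. For fixed $u(x')$ the map $s \mapsto s\, u(x')$ is linear in $s$, and the orthotope $\nabla h(x')^{\top} D_M(x')$ is a compact convex polytope equal to the convex hull of its vertices $\Phi(x')$. A linear functional attains its minimum over such a polytope at a vertex, so
\[
    \min(\nabla h(x')^{\top} D_M(x') u(x')) = \min_{i \in \{1, \ldots, 2^m\}} \phi_i(x') u(x') .
\]
Substituting this identity into \eqref{eq:multiplicative-1} turns the single inequality into the requirement that $\nabla h(x')^{\top}(f(x') + g(x')u(x')) + \phi_i(x') u(x') \geq -\alpha(h(x'))$ hold for every $i$, which is exactly \eqref{eq:mainTheorem-m2}. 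Invoking Corollary~\ref{prop:disturbed-cbf-mult} then yields that $h$ is a valid robust CBF for \eqref{eq:control-affine-disturbed-mult}, completing the argument.
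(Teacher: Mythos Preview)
Your proposal is correct and follows essentially the same route as the paper's proof: both reduce to Corollary~\ref{prop:disturbed-cbf-mult}, identify $\nabla h(x')^{\top} D_M(x')$ with $\co \Phi(x')$, and then use that the minimum of the linear form $s \mapsto s\,u(x')$ over this polytope is attained at a vertex. Your argument is in fact more explicit than the paper's in justifying why the image is an $m$-orthotope (via the column-separable structure of $M \mapsto \nabla h(x')^{\top} M$ over the product-of-intervals $D_M(x')$), a step the paper essentially asserts.
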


\begin{figure}
\centering
\includegraphics[width=\linewidth]{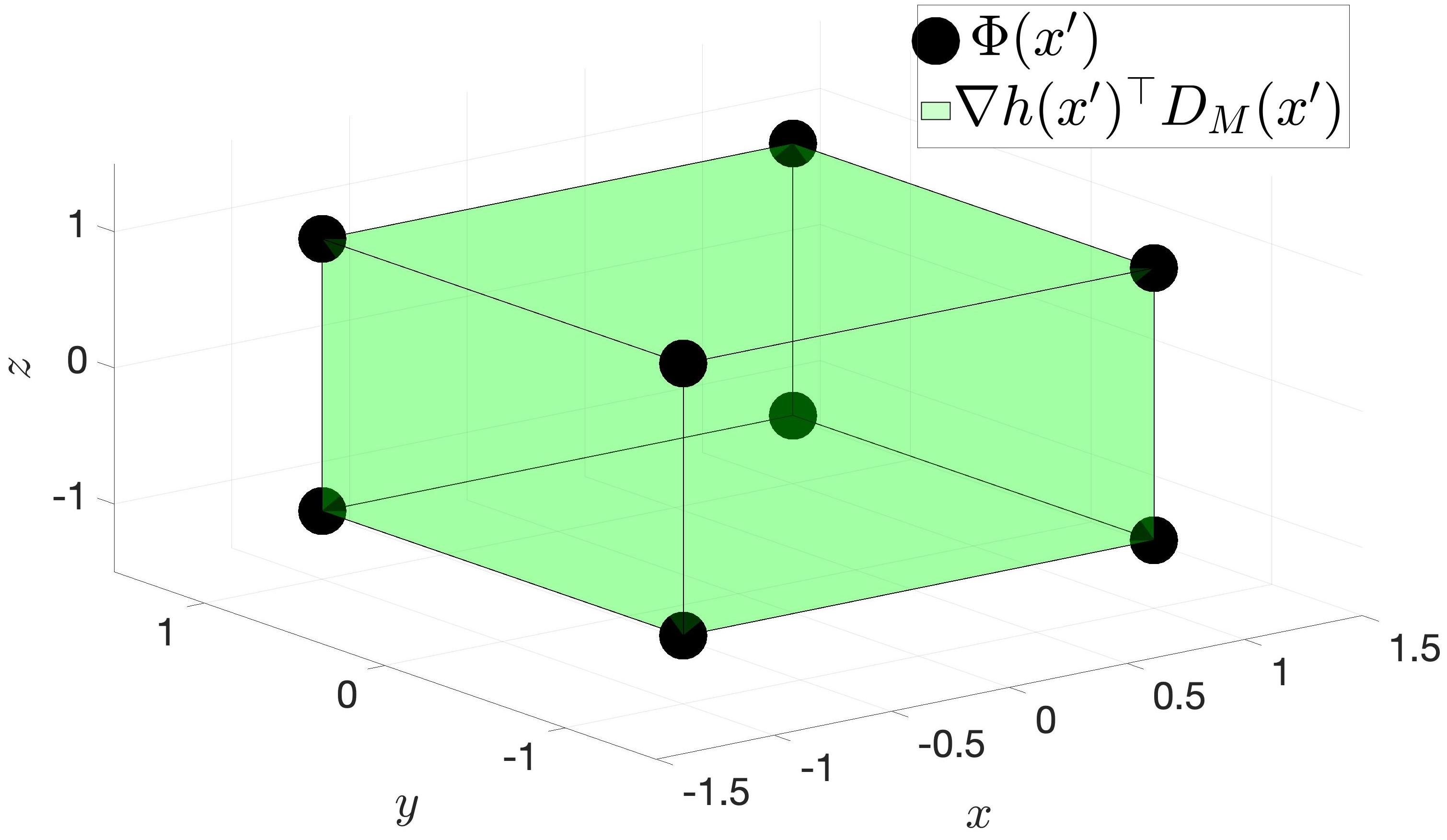}
\caption{\todo{An arbitrary example of $\nabla h(x')^{\top}D_{M}(x')$ for $m = 3$. The set $\Phi(x') = \{\phi_i(x')\}^{2^m}_{i=1}$ used in Proposition~\ref{prop:valid-cbf-disturbed-m2} is the set of vertices denoted by the black dots of the $m$-orthotope defined by $\nabla h(x')^{\top}D_{M}(x')$ shown in green.}}
\label{fig:d-est-example}
\end{figure}

\begin{proof}
We need to show that 
\begin{equation}
        \begin{split}
        \label{eq:ineq2}
            & \nabla h(x')^{\top}(f(x') + g(x')u(x')) \\
            & + \min (\nabla h(x')^{\top} D_{M}(x') u(x')) \geq  -\alpha(h(x'))  , \forall x' \in \mathbb{R}^{n},
        \end{split}
\end{equation}
is equivalent to \eqref{eq:mainTheorem-m2}. We begin by noting that 
\begin{equation}
\label{eq:minCo-2}
\nabla h(x')^{\top} D_{M}(x') = \co \Phi(x').
\end{equation}
Using \eqref{eq:minCo-2}, we obtain 
\begin{equation}
    \label{eq:minCo-3}
    \begin{split}
    \min (\nabla h(x')^{\top} D_{M}(x') u(x')) &= \min (\co \Phi(x') u(x')) \\
    &= \min (\Phi(x') u(x')).
    \end{split}
\end{equation}
Next, we substitute \eqref{eq:minCo-3} into \eqref{eq:ineq2} which yields 
    \begin{align}
        \begin{split}
        \label{eq:ineq1-2}
            & \nabla h(x')^{\top}(f(x') + g(x')u(x')) \\
            & + \min (\Phi(x') u(x')) \geq  -\alpha(h(x'))  , \forall x' \in \mathbb{R}^{n} ,
        \end{split}
    \end{align}    
    which is equivalent to \eqref{eq:mainTheorem-m2}.
\end{proof}


Figure~\ref{fig:d-est-example} pictorially demonstrates how the set $\Phi(x')$ used in the proposition above is defined with respect to $\nabla h(x')^{\top}D_{M}(x')$. Note that the m-orthotope is constructed as follows
\begin{equation}
    \begin{split}
        [\nabla h(x')^{\top}D_{M}(x')]_j =  \co\{\sum_i \min_k[\nabla h(x')]_i a^k_{ij}, \\ \sum_i \max_k[\nabla h(x')]_i a^k_{ij}\},
    \end{split}
\end{equation}
where $a^k_{ij}$ is defined as in \eqref{eq:d_est-m}.

\subsection{Discussion}

The convenient structure of the conditions presented in Theorems~\ref{thm:valid-cbf-disturbed}~and~\ref{thm:valid-cbf-disturbed-m} allow for the synthesis of controllers that satisfy \eqref{eq:mainTheorem}~or~\eqref{eq:mainTheorem-m} for a given system in real time, since only a finite number of points need to be evaluated. In  section~\ref{sec:control-synthesis-via}, we present a QP for controller synthesis, which we apply to the Robotarium as presented in section~\ref{sec:robust-collision-avoidance}. However, one question that remains is: how does one estimate the disturbances $D_A$ and $D_M$? An advantage of this approach is that any finite-cardinality set satisfies the theoretical requirements. As such, one straightforward method of estimating the disturbance involves gathering a large number of data points then computing $\Psi$ via \eqref{eq:control-affine-disturbed} or \eqref{eq:control-affine-disturbed-mult}, and fitting a convex set to the data. An example of the latter is shown in Figure~\ref{fig:psi_estimation}, where the additive disturbance on the state of a differential-drive robot is estimated using a bounding box including $> 99\%$ of the datapoints obtained via a data collection process similar to the one described in Section~\ref{sec:disturb-estimation}. Another scenario is when the dynamic model is  affected by a disturbance with a known distribution. For example, consider the following system 
\begin{equation*}
    \dot{x}(t) = f(x(t)) + g(x(t))u(x(t)) + \epsilon
\end{equation*}
where $\epsilon \sim \mathcal{N}(\mu, \Sigma)$ is Gaussian noise having a mean and diagonal covariance matrix (i.e., uncorrelated noise) known apriori. In this case, we can straightforwardly account for this noise by leveraging the additive disturbance CBF formulation and defining 
$$[D_A]_i = \mu_i + [-k_c \sigma_i, k_c \sigma_i], \forall i \in \{1,\ldots,n\},$$
where $\mu_i$, $\sigma_i^2$ denote the $i$-th component of the mean vector and covariance matrix's diagonal, respectively. The coefficient $k_c$ is a user-chosen confidence parameter (e.g., $k_c~=~2$ achieves a confidence of $95.5\%$). Note that this can be extended to the case where the covariance matrix is not diagonal.
However, in many cases, the disturbance could vary spatially (e.g., uneven terrain, wind), and we would like our estimation to capture this. Moreover, it may not be possible to obtain the data points or disturbance distribution apriori; in these cases, the disturbance needs to be estimated online.  Therefore, in the next section, we present an online, sample-efficient disturbance-estimation method based on Gaussian processes.  

\begin{figure}
    \centering
    \includegraphics[width=\linewidth]{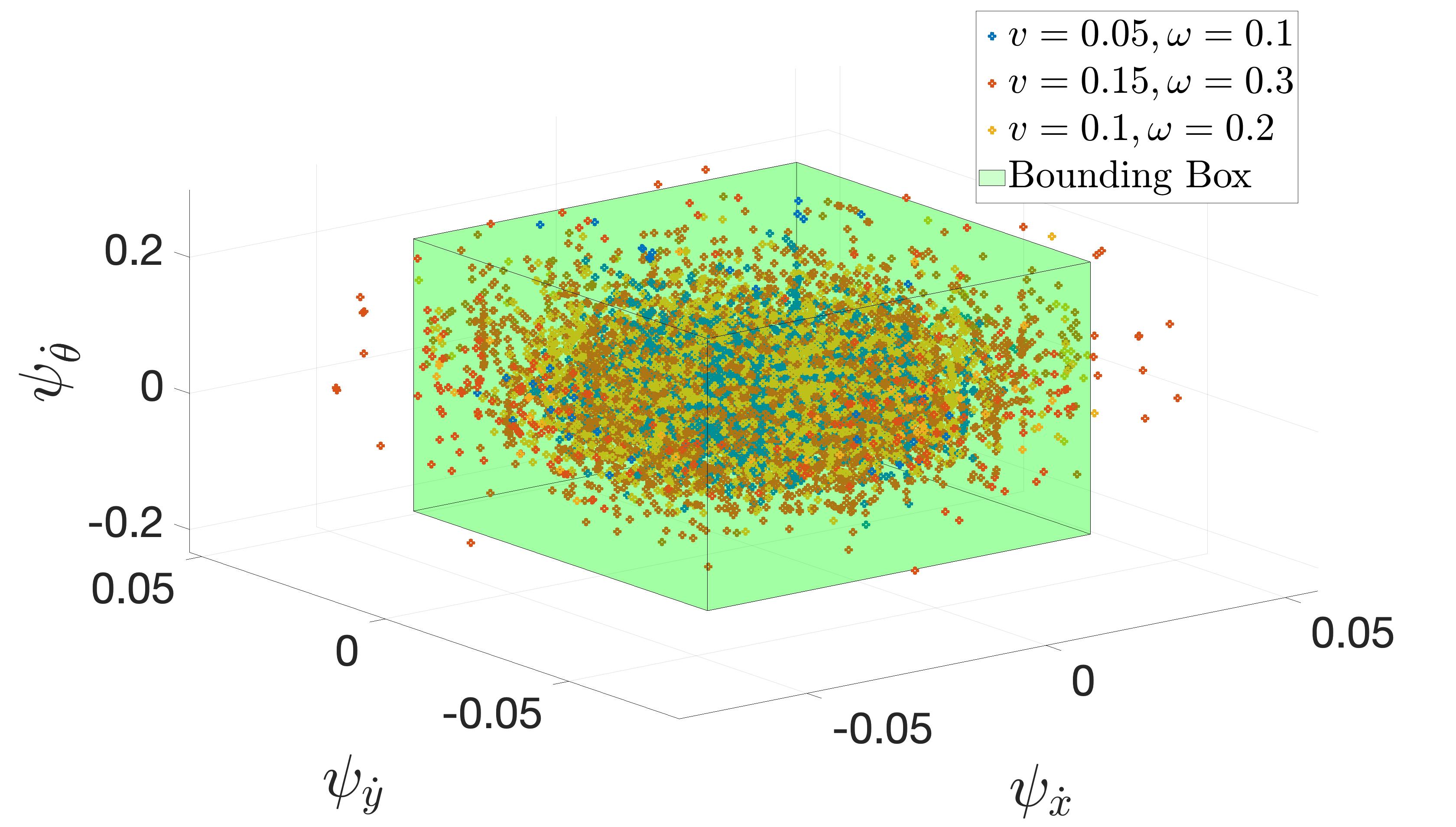}
    \caption{Additive disturbance estimation example for a differential drive robot through a bounding box that includes $>99\%$ of the collected datapoints. The symbols $\psi_{\dot{x}}$, $\psi_{\dot{y}}$ and $\psi_{\dot{\theta}}$  denote the disturbance on  $\dot{x}$, $\dot{y}$ and $\dot{\theta}$ respectively.}
    \label{fig:psi_estimation}
\end{figure}

\section{Estimating the Disturbance Set through Gaussian Process Regression}
\label{sec:disturb-estimation}

In this section we demonstrate how the disturbance sets introduced in the previous section can be estimated using  Gaussian Process Regression (GPR). GPR \cite{rasmussen2003gaussian} is a kernel-based regression model used for prediction in many applications such as robotics, reinforcement learning, and data visualisation \cite{deisenroth2015distributed}. One of the main advantages of GPR compared to other regression methods is data efficiency, which allows us to obtain a good estimate of the disturbance using only a small number of data-points. We begin by introducing the general problem setup for GPR and then proceed to specialize the results for our application.

\subsection{Problem Formulation}
\label{subsec:gpr_intro}
\todo{Given training data $\mathcal{D} = \{x^{(i)},y^{(i)}\}_{i=1}^N$ where $x^{(i)} \in \mathbb{R}^n$ and $y^{(i)} \in \mathbb{R}$, the objective is to approximate the unknown target function $f_D: \mathbb{R}^n \rightarrow \mathbb{R}$ which maps an input $x^{(k)}$ to the target value $y^{(i)}$ given the model $y^{(i)} = f_D(x^{(i)}) + \epsilon^{(i)}$, where $\epsilon^{(i)} \sim \mathcal{N}(0,\sigma_n^2)$ is Gaussian noise having zero mean and variance $\sigma^2_n$. Thus, the prior on the target values can be described by \mbox{$\mathbf{y} \sim \mathcal{N}(0,\mathbf{K} + \sigma_n^2   \mathbf{I})$ where $\mathbf{K}$} is the covariance matrix computed through a covariance function $k: \mathbb{R}^n~\times~\mathbb{R}^n \rightarrow~\mathbb{R}$; one commonly used choice of which is the Gaussian kernel given by 
\begin{align}
[\mathbf{K}]_{ij} &= k(x^{(i)},x^{(j)}) \\
&= \sigma_s^2 \exp{ \left(- \frac{1}{2} (x^{(i)} - x^{(j)})^{\top} W (x^{(i)} - x^{(j)})\right) },
\end{align}
where $\sigma_s$ and $W$ denote the signal variance and the kernel widths respectively. Note that the choice of a zero-mean prior is motivated by the fact that we only estimate the disturbance term in the dynamics which we assume to have zero-mean. As such, the joint distribution of the training labels and predicted value for a query point $x_{\ast}$  is given by
$$
\begin{bmatrix}
\mathbf{y} \\ 
f_D(x_{\ast})
\end{bmatrix} = 
\mathcal{N}\left(\mathbf{0}, 
\begin{bmatrix}
\mathbf{K} + \sigma_n^2\mathbf{I} & \mathbf{k}_{\ast} \\
\mathbf{k}_{\ast}^{\top} & k(x_{\ast}, x_{\ast})
\end{bmatrix}
\right),
$$
where $[\mathbf{k}_{\ast}]_i = k(x^{(i)}, x_{\ast})$.
The mean and variance of the predictions can then be obtained by conditioning on the training data as
\begin{align*}
    \mu_{x_{\ast}} &= \mathbf{k}_{\ast}^{\top}(\mathbf{K} +  \sigma_n^2\mathbf{I})^{-1}\mathbf{y} \\
    \Sigma_{x_{\ast}} &= k(x_{\ast},x_{\ast}) -  \mathbf{k}_{\ast}^{\top}(\mathbf{K} +  \sigma_n^2\mathbf{I})^{-1}\mathbf{k}_{\ast}.
\end{align*} 
Note that $\theta = [\sigma_s \; \sigma_n \; W]$ are hyperparameters which are typically optimized by maximizing the log-likelihood of the training data (e.g. \cite{wang2019exact}) . The main drawback of GPR is the need to compute $(\mathbf{K} +  \sigma_n^2\mathbf{I})^{-1}$, which is typically performed using Cholesky decomposition resulting in a computational complexity of $\mathcal{O}(N^3)$, where $N$ is the number of training data points. To remedy this, many online GPR approximation algorithms have been proposed in the literature that leverage the sparsity of the covariance matrix. For example, in \cite{nguyen2009model}, the authors propose an online approximation of GPR that clusters the training data into multiple local GPR models. To handle datastreams, rank-one updates are performed to the Cholesky decomposition. Lastly, the prediction for a query point is then performed through a weighted average of the predictions of the local models. We refer the reader to the taxonomy in \cite{liu2020gaussian} for details on  scalable GPs.}

\subsection{GPR for Disturbance Estimation}
\label{subsec:gpr_for_estimation}

\todo{In this subsection, we discuss how GPR can be used to estimate the disturbance sets $D_A(x)$ and $D_M(x)$ introduced in Section~\ref{sec:barrier-functions-for-disturbed}. Note that, in this paper, we assume that the disturbance is not time-varying and leave this case for future work.

\subsubsection{Additive Disturbance Estimation}
Recalling the additive disturbance model from \eqref{eq:control-affine-disturbed}, we aim to estimate the disturbance set $D_A(x(t))$ through the use of GPR. The latter can be achieved by obtaining the dataset $\mathcal{D} = \{x^{(i)}, y^{(i)}\}^{N}_{i=1}$ with labels $y^{(i)}$ given by
\begin{equation}
    y^{(i)} = \hat{\dot{x}}^{(i)} - f(x^{(i)}) - g(x^{(i)})u^{(i)}, 
\end{equation}
where $\hat{\dot{x}}^{(i)}$ is the noisy measurement of the dynamics. Note that, in this case, $y^{(i)} \in \mathbb{R}^n$, therefore, we train one GP per dimension for a total of $n$ GP models.

Then, we can obtain the disturbance estimate for a query point $x_{\ast}$ as 
\begin{equation}
\label{eq:disturb_estimate}
[D_A(x_{\ast})]_i = \mu_i(x_{\ast}) + [-k_c \sigma_i(x_{\ast}), k_c \sigma_i(x_{\ast})],
\end{equation}
where $\mu_i(x_{\ast})$ and $\sigma_i(x_{\ast})$ are the mean and standard-deviation predictions of the $i$th GP for query point $x_{\ast}$.}


\vspace{2mm}
\todo{
\subsubsection{Multiplicative Disturbance}: Given the model 
\begin{gather*} 
    \dot{x}(t) \in f(x(t)) + (g(x(t) + D_M(x(t)))u(x(t)),
\end{gather*}
similarly to the additive case, we aim to obtain an estimate of the disturbance $D_M$ using GPR. Note that as opposed to $D_A$, $D_M$ is a matrix in $\mathbb{R}^{n \times m}$. The latter poses a difficulty in building the dataset $\mathcal{D}$ since each data point results in only $n$ equations, namely
\begin{gather*} 
    \hat{\dot{x}}^{(i)} - f(x^{(i)}) - g(x^{(i)})u^{(i)} =   y^{(i)}u^{(i)}, 
\end{gather*}
whereas $y^{(i)} \in \mathbb{R}^{n \times m}$. As such, we circumvent this by assuming that only $n$ user-chosen entries are non-zero, which is a reasonable assumption as we will show in the experiments.

For example, consider our application of interest: the fleet of differential drive robots operating in the Robotarium \cite{pickem2017robotarium}, a remotely accessible multi-robot testbed. Each robot has state $x \in \mathbb{R}^3$ composed of its global position in the plane and heading
$ x \triangleq
    \begin{bmatrix}
     x_{1} ~ x_{2} ~ \theta
     \end{bmatrix}^{\top}$ and disturbed dynamics 
$
    \dot{x} \in (g_x(x) + D_M(x))
    u(x) ,
$
where 
\begin{equation}
\label{eq:grits_gx}
g_x(x) \triangleq \begin{bmatrix}
        \cos \theta  & 0 \\ 
        \sin \theta  & 0 \\ 
        0 & 1 
    \end{bmatrix}, 
\end{equation}
\begin{equation*}
    D_M(x) = \begin{bmatrix}
    \co \psi_{11}(x) & \co \psi_{12}(x) \\
    \co \psi_{21}(x) & \co \psi_{22}(x) \\
    \co \psi_{31}(x) & \co \psi_{32}(x)
    \end{bmatrix},
\end{equation*}
and the unicycle model input 
$u \triangleq
    \begin{bmatrix}
         v~ \omega
    \end{bmatrix}^{\top}$, with $v$ and $\omega$ denoting the linear and angular velocities respectively. We assume that the commanded angular velocity can have no effect on the linear velocity of the robot in the plane, thus obtaining that $\co \psi_{12}(x) = \co \psi_{22}(x) = {0}$. Similarly, we also assume that the linear velocity can have no effect on $\dot{\theta}$ which results in $\co \psi_{31}(x) = 0$, yielding the following disturbance structure
    \begin{equation}
    \label{eq:DM_grits}
    D_M(x) = \begin{bmatrix}
    \co \psi_{11}(x) & \{0, 0\} \\
    \co \psi_{21}(x) & \{0, 0\} \\
    \{0, 0\}              & \co \psi_{32}(x)
    \end{bmatrix}.
    \end{equation}
As a result, we are now able to estimate each of the $n=3$ non-zero entries of $D_M(x)$ using a separate GP model, for a total of $n$ models.}  


\begin{figure}[t]
    \centering
    \includegraphics[width=\linewidth]{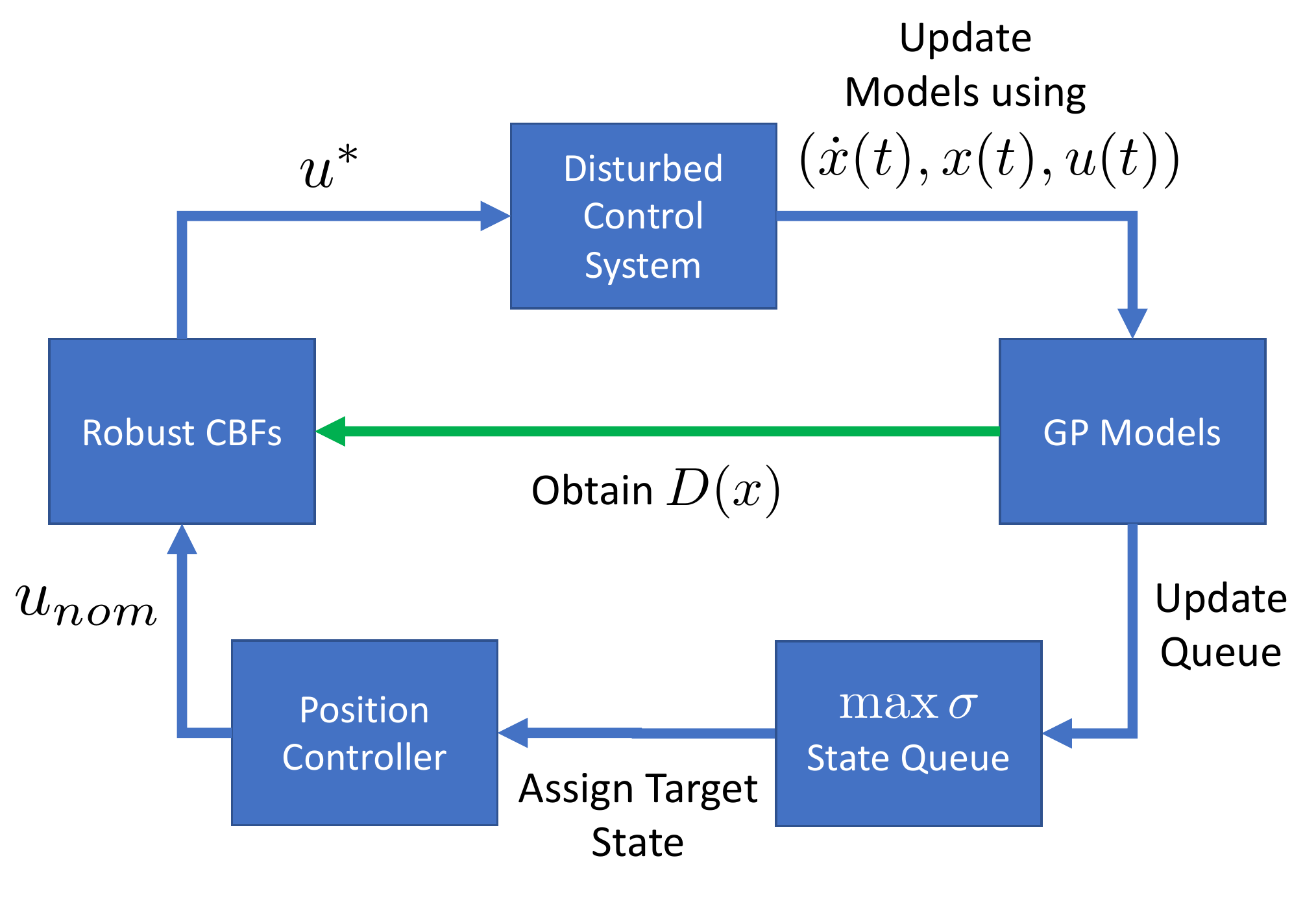}
    \caption{Schematic of the proposed controller-synthesis framework using GPs for disturbance estimation. The robots efficiently explore the environment by visiting the points with highest variance ($\text{argmax}_x \sigma(x)$ state queue). The points collected are used to estimate the disturbance and update the queue using GPs. These estimates are then utilized by the Robust CBFs to guarantee safety. Note that the GP models block can be replaced by any other data-driven disturbance-estimation method.} 
    \label{fig:control-synth-schematic}
\end{figure}

\section{Controller Synthesis}
\label{sec:control-synthesis-via}

In this section, we introduce a QP-based controller-synthesis framework for disturbed control-affine systems that combines the results from Sections~\ref{sec:barrier-functions-for-disturbed}~and~\ref{sec:disturb-estimation}. The framework estimates the disturbance online using GPR as described in the previous section.  The choice of modeling control-affine system is typically nonrestrictive since many control systems (especially mechanical) are control affine.  We choose to focus on the multiplicative disturbance case and note that the framework can be straightforwardly altered for the additive disturbance case and other disturbance estimation methods.  

Consider a disturbed control affine dynamical system modeled by \eqref{eq:control-affine-disturbed-mult}, where $D_M(x(t))$ is estimated using GPR as described in the previous section. Given a nominal control $u_{\text{nom}}$, we would like to ensure the system's safety by minimally altering $u_{\text{nom}}$. This can be achieved through a QP as shown in the next proposition. 

\begin{proposition}
    \label{prop:control-synthesis}
   Let $h : \mathbb{R}^{n} \to \mathbb{R}$ be a continuously differentiable function and $\Phi(x') = \{\phi_i(x')\}^{2^m}_{i=1}$ be the set of vertices of the $m$-orthotope defined by 
$$\nabla h(x')^{\top} D_{M}(x'),$$
where $D_{M}(x')$ is defined as in \eqref{eq:d_est-m}.  If $u^{*} : \mathbb{R}^{n} \to \mathbb{R}^{m}$ defined as  
    \begin{align}
        \label{QP}
        & u^{*}(x') = \argmin_{u \in \mathbb{R}^{m}} \|u_{\text{nom}}(x') - u\|^2 \\
            & \nabla h(x')^{\top}(f(x') + g(x') u) + \phi_i(x') u\geq \\
            & -\alpha(h(x')), \forall x' \in \mathbb{R}^{n} , \forall i \in \{1,\ldots, 2^m\},
    \end{align}
    is continuous, then $h$ is valid robust CBF for {\eqref{eq:control-affine-disturbed-mult}}.
\end{proposition}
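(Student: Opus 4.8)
The plan is to observe that Proposition~\ref{prop:control-synthesis} is an immediate specialization of Proposition~\ref{prop:valid-cbf-disturbed-m2}: the feasible set of the QP in \eqref{QP} is, by construction, exactly the set of controls satisfying the $2^m$ robust-CBF inequalities \eqref{eq:mainTheorem-m2}. Thus the role of this proof is not to establish a new estimate but to verify that the QP minimizer inherits the feasibility needed to invoke the earlier result, and then to hand off the continuity hypothesis.

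First I would fix an arbitrary $x' \in \mathbb{R}^{n}$ and note that, by definition of $u^{*}(x')$ as the minimizer of $\|u_{\text{nom}}(x') - u\|^2$ subject to the listed inequalities, $u^{*}(x')$ is itself a feasible point of that QP. Consequently it satisfies
\[
\nabla h(x')^{\top}(f(x') + g(x') u^{*}(x')) + \phi_i(x') u^{*}(x') \geq -\alpha(h(x'))
\]
for every $i \in \{1, \ldots, 2^m\}$. Since $x'$ was arbitrary, this is precisely condition \eqref{eq:mainTheorem-m2} with the continuous control taken to be $u = u^{*}$.

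Next I would invoke the hypothesis that $u^{*}$ is continuous. Together with the locally Lipschitz extended class-$\mathcal{K}$ function $\alpha$ appearing in the constraints, the pair $(u^{*}, \alpha)$ then meets every requirement of Proposition~\ref{prop:valid-cbf-disturbed-m2}, which directly yields that $h$ is a valid robust CBF for \eqref{eq:control-affine-disturbed-mult}, completing the argument.

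I expect the proof as stated to carry no genuine obstacle, precisely because the delicate part has been assumed: continuity of the solution map $x' \mapsto u^{*}(x')$ is taken as a hypothesis rather than derived, and feasibility of the QP at every $x'$ (so that $u^{*}$ is well-defined) is implicit in assuming that $u^{*}$ exists and is continuous. I would flag this explicitly. In a more self-contained treatment the real work would be to establish continuity from structural assumptions on $f$, $g$, $h$, $D_M$, and $u_{\text{nom}}$ (e.g.\ a constraint qualification guaranteeing a nonempty, continuously varying feasible set together with standard parametric-QP continuity results), but none of that is needed for the statement as phrased here.
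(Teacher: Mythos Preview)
Your proposal is correct and mirrors the paper's treatment: the paper does not supply a formal proof of Proposition~\ref{prop:control-synthesis} but treats it as an immediate consequence of Proposition~\ref{prop:valid-cbf-disturbed-m2}, with the remark following the statement noting (just as you do) that existence of $u^{*}$ is assumed and that continuity would follow from additional structural hypotheses rather than being derived here. Your explicit unpacking of the feasibility-then-handoff argument is exactly the intended reasoning.
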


\todo{Note that the existence of $u^\ast$ is assumed and that its continuity may be guaranteed under certain additional assumptions that are very similar to the ones presented in Theorem~$3$ from \cite{7782377}, which states sufficient conditions for the continuity of QP control in the case of regular CBFs. Specifically, if $u_{\text{nom}}$, $f$, $g$, $\phi_i, \; \forall i$ and  $\nabla h$ are all locally Lipschitz continuous, and $\nabla h(x')^{\top}g(x') \neq 0$ for all $x' \in \text{Int}(\mathcal{C})$, then $u^*$ is locally Lipschitz continuous for $x' \in \text{Int}(\mathcal{C})$. We refer the reader to \cite{morris2013} for further details.} This QP assumes that the nominal control  $u_{\text{nom}}$ is given, and minimally (in the least square sense) alters it such that $h$ remains a CBF. Since this safety framework is control agnostic, as described later in Section~\ref{sec:experiments}, we choose to implement a position controller  to compute $u_{\text{nom}}$ which efficiently samples the environment by visiting the state with the highest variance (i.e., $x_\text{next} = {\text{argmax}}_{x \in \mathcal{X}} (\sigma(x))$,  where $\mathcal{X} \subset \mathbb{R}^{n}$ is a discretization of the state space. As depicted in Figure~\ref{fig:control-synth-schematic}, the updated GP models are in turn used to obtain $D_M(x)$ thus creating a feedback loop between the disturbance estimation method, the robust CBFs and the disturbed dynamical system.

\section{Specialization of results for Robotarium}  \label{sec:robust-collision-avoidance}

In this section, we discuss the application of the robust CBFs detailed in Section~\ref{sec:barrier-functions-for-disturbed} and the controller-synthesis framework in Section~\ref{sec:control-synthesis-via} on a fleet of differential-drive robots in the Robotarium. The platform automatically and continuously runs experiments  of which many do not incorporate collision avoidance (e.g., \cite{Hatanaka2009,Park2016}). To handle this, a minimally invasive CBF QP framework is provided to the users \cite{pickem2017robotarium}. However, the fact that robots are susceptible to issues (e.g., wheel slip) coupled with the required long-duration autonomy of the system motivate the need for a robust version of the collision-avoidance framework and make the Robotarium a fitting choice for this application. 

We assume that each robot obeys unicycle dynamics and that the disturbance is multiplicative with respect to the input as in subsection~\ref{subsec:cbf_mult}. The choice of modelling the disturbance as multiplicative was motivated by data collected on the Robotarium indicating a correlation between the input and the magnitude of the disturbance. Moreover, as in Section~\ref{sec:disturb-estimation}, GPs are leveraged to estimate the disturbance. For the sake of clarity, we omit the explicit dependence on time for brevity. 

Consider $N$ differential-drive robots, where each robot has state $x_{i} \in \mathbb{R}^3$ composed of its global position in the plane and heading
\begin{gather} 
    \label{eq:robDynamics}
    x_{i} \triangleq
    \begin{bmatrix}
     x_{i, 1} ~ x_{i, 2} ~ \theta_{i}
     \end{bmatrix}^{\top} .
\end{gather}
Each robot has dynamics
\begin{gather}
\label{eq:gritsbot-dynamics-disturbed}
    \dot{x}_{i} \in (g_x(x_i) + D_M(x_i))
    u_{i}(x_{i}) ,
\end{gather} 
where $g_x$ and $D_M$ are defined as in \eqref{eq:grits_gx} and \eqref{eq:DM_grits} respectively, and the unicycle model input as
$
    u_{i} \triangleq
    \begin{bmatrix}
         v_{i},~ \omega_{i} 
    \end{bmatrix}^{\top} ,
$ where $v_{i}$ and $\omega_{i}$ are the linear and angular velocities of robot $i$, respectively. This geometric model is representative of the control methodology of the Robotarium's robots shown in Figure~\ref{fig:GRITSBOT}.

Since the GRITSBot-Xs' wheels are located toward the back, as shown in Figure~\ref{fig:GRITSBOT}, we model the location of the centroid of each robot as an output of its state 
\begin{align}
    p_i(x_{i}) \triangleq 
    \begin{bmatrix}
        x_{i1} \\
        x_{i2}
    \end{bmatrix}
    + l_{p}
    \begin{bmatrix}
        \cos{\theta_i} \\
        \sin{\theta_i} 
    \end{bmatrix} .
\end{align}
where $l_p > 0$ is the distance between the wheels' axis and the centroid of the robot. This permits the formulation of a collision-avoidance constraint from the centroid of the robot. 

In order to obtain $\dot{p}_i(x_i)$, we need to propagate the disturbance through the dynamics of the output $p_i(x_i)$. Since we model the disturbance using convex-hulls, this process is straightforward and is another advantage of our approach. Differentiating $p_i$ along the unicycle dynamics yields
\begin{gather} 
    \label{eq:pi_dot}
    \dot{p}_i(x_{i}) = G_p(x_{i})u_{i}(x_{i}) ,
\end{gather}
where
\begin{align}
    G_{p}(x_{i}) &\triangleq R(\theta_{i})L(x_i) + 
    \begin{bmatrix}
    1 & 0 & 0 \\ 
    0 & 1 & 0
    \end{bmatrix}
    D_M(x_i),  \\\\
    R(\theta_{i}) &\triangleq 
        \begin{bmatrix}
            \cos \theta_{i} & -\sin \theta_{i} \\
            \sin \theta_{i} & \cos \theta_{i} 
        \end{bmatrix},\\\\
    L(x_i) &\triangleq 
    \begin{bmatrix}
        1 & 0 \\ 
        l_{p}\co \psi_{31}(x_i) & l_{p} (1+\co \psi_{32}(x_i))
    \end{bmatrix}.
\end{align}
\vspace{.5mm}

\noindent
For later convenience, define the ensemble variables
\begin{gather}
    u \triangleq
    \begin{bmatrix}
        u^{\top}_1 & \ldots & u_N^{\top}
    \end{bmatrix}^{\top},~ %
    x \triangleq 
    \begin{bmatrix}
    x^{\top}_1 & \ldots & x_N^{\top}
    \end{bmatrix}^{\top} \\
   p \triangleq 
    \begin{bmatrix}
        p^{\top}_1 & \ldots & p_N^{\top}
    \end{bmatrix}^{\top}  
\end{gather}

Using the fact that $l_{p}$ is chosen so that $p_{i}$ is at the centroid of each robot, the following CBF encodes a collision-avoidance constraint between robots $i$ and $j$
\begin{equation} 
    \label{eq:collision-avoidance-cbf}
    h_{ij}(x) \triangleq \norm{p_{i}(x_{i}) - p_{j}(x_{j})}^2 - \delta^{2} ,
\end{equation}
where $\delta > 0$ denotes the diameter of the robot.  Note that
\begin{equation}
    \nabla_{p(x_{i})} h_{ij}(x) = (p_{i}(x_{i}) - p_{j}(x_{i})) = - \nabla_{p(x_{j})} h_{ij}(x). 
\end{equation}

The CBF in \eqref{eq:collision-avoidance-cbf} models each robot as a circle of diameter $\delta$. Again, note that the center of each robot's wheel axle is shifted from the center of the circle.  Conveniently, this issue can be easily mitigated by setting the look-ahead distance $l_p$ to map the point $p_i$ to the center of the robot. 

The barrier certificate that needs to be satisfied for each pair $(i, j)$ of robots is
\begin{align}
    \label{eq:barrier-certificate}
    &\min(\nabla_{p_i(x_i)} h_{ij}(x)^{\top}G_p(x_i)u_{i} \\
    & \hspace{4mm} + \nabla_{p_j(x_j)} h_{ij}(x)^{\top}G_p(x_j)u_{j}) \geq - \gamma h_{ij}(x)^3, 
\end{align}
where the chosen extended class-$\mathcal{K}$ function has been $\alpha(s) = \gamma s^3$ for some $\gamma > 0$. 
We begin by, as described in Section~\ref{sec:disturb-estimation}, splitting \eqref{eq:barrier-certificate} into $2^{2m}$ constraints (16 in this case) in order to obtain linearity with respect to $u$. Note that \eqref{eq:barrier-certificate} is equivalent to
\begin{align}
\min(Q^{(i,j)}
\begin{bmatrix} u_i \\ u_j 
\end{bmatrix}) \geq - \gamma h_{ij}(x)^3, 
\end{align}
where $$Q^{(i,j)} \triangleq [\nabla_{p_i(x_i)} h_{ij}(x)^{\top}G_p(x_i), \nabla_{p_j(x_j)} h_{ij}(x)^{\top}G_p(x_j)].$$
Note that,  since $\nabla_{p_i(x_i)} h_{ij}(x)^{\top}G_p(x_i)$ is a vector of size $2$,  $Q^{(i,j)}$ has $4$ entries each of which is an interval (i.e., a convex-hull of 2 points), hence $Q^{(i,j)}$ can be equivalently represented as the convex hull of $16$ vectors
\begin{equation} 
Q^{(i,j)} = \co(\{q^{(i,j)}_1,\ldots, q^{(i,j)}_{16}\}) \textrm{ where } q^{(i,j)}_k \in \mathbb{R}^{4 \times 1}.
\end{equation}
By applying Proposition~\ref{prop:valid-cbf-disturbed-m2} we obtain the synonymous representation of \eqref{eq:barrier-certificate} as
$$q_k^{(i,j)}
\begin{bmatrix} u_i \\ u_j 
\end{bmatrix} \geq - \gamma h_{ij}(x)^3 \;\; \forall k \in \{1,\ldots,16\}.$$

\begin{figure}[t]
    \centering
    \includegraphics[width=0.30\textwidth]{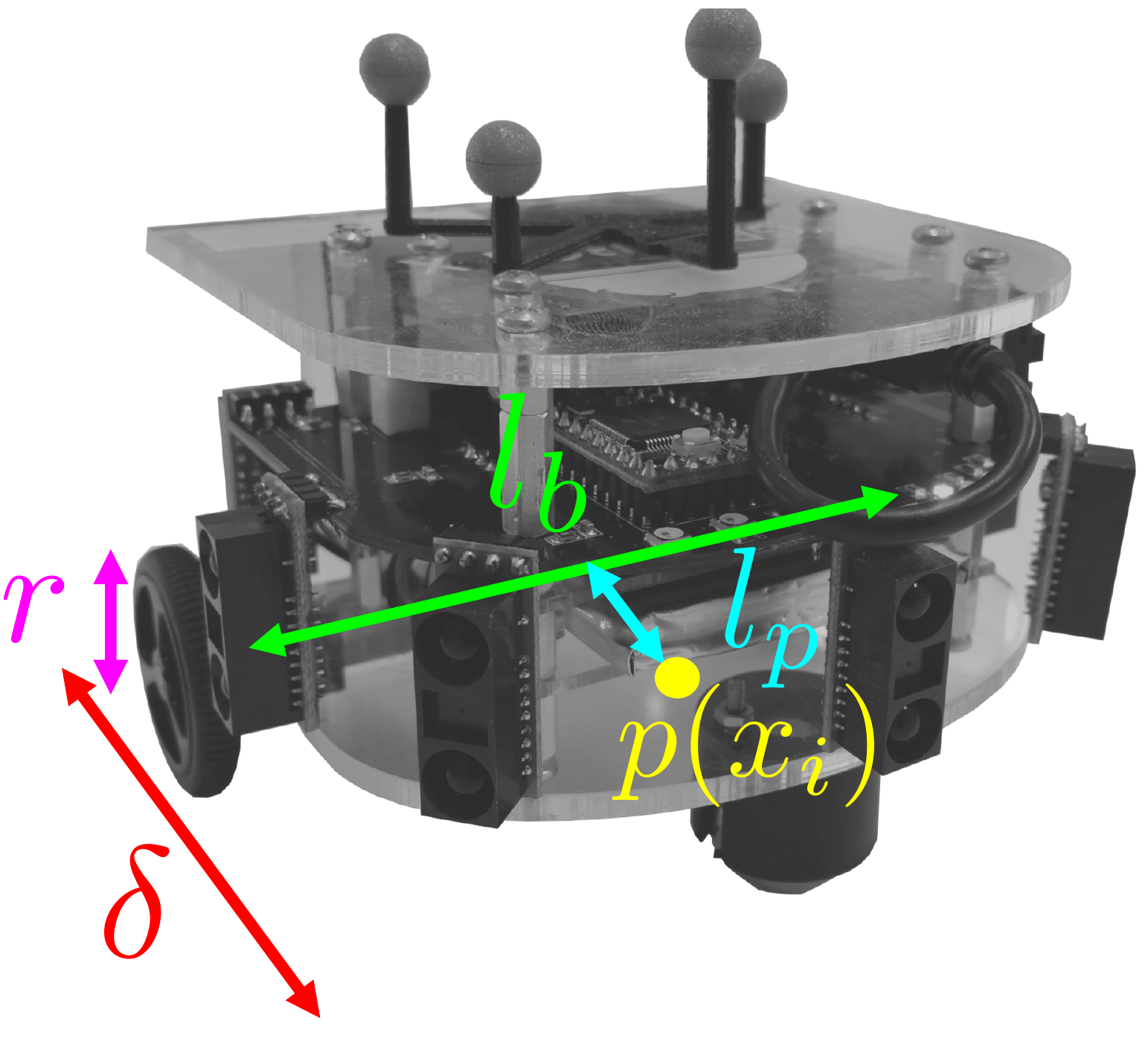}
    \caption{A picture of the GRITSbot, the differential-drive robot used by the Robotarium. The symbol $l_b$ denotes the base (i.e. wheel axle) length, $r$ the wheel radius, $l_p$ the projection distance, $\delta$ the diameter of the GRITSbot, and $p$ the center of the robot.  Note that $l_p$ projects the wheel-axle's center-line, denoted by the green-line, to the center-line of the robot.}
    \label{fig:GRITSBOT}
\end{figure}

Now, it remains to formulate \eqref{eq:barrier-certificate} as in a form conducive to an ensemble optimization program as in \cite{pickem2017robotarium}.  It is then convenient to define a matrix-valued function $A$ and a vector-valued function $b$ so that the constraint can take the ensemble form $A(x)u \geq b(x)$. We do so by defining the following matrix $A_{(i,j)}(x) \in \mathbb{R}^{16 \times 2N}$ and vector $b_{(i,j)}(x)$ for each pair $(i,j)$ of robots
\begin{align}
    & A_{(i,j)}(x) = 
    \begin{bmatrix}
        \bar{q}_1^{(i,j)\top}& \ldots & \bar{q}_{16}^{(i,j)\top},
    \end{bmatrix}^\top \\
    & b_{(i,j)}(x) = - \gamma h_{ij}^3(x) \mathbf{1}_{16},
\end{align}
where $\bar{q}^{(i,j)}_k \in \mathbb{R}^{2N}$ is the ensemble form of of  ${q}^{(i,j)}_k$. Note that this process is repeated $\forall i \in \{1,\ldots,N-1\}$ and $\forall j \in \{i+1,\ldots,N\}$ to account for all possible pairwise-collisions, and the $A$ and $b$ matrices are then obtained by vertically stacking all the generated $A_{(i,j)}$ and $b_{(i,j)}$, respectively.  At last, by using all the variables introduced so far, we can formulate a similar QP to the one in Proposition~\ref{prop:control-synthesis} as
\begin{align}
\label{eq:QP2}
    u^*=\underset{u \in \mathbb{R}^m}{\text{argmin.}} \quad & ||L_c(u_{\text{nom}}(x) - u)||^2 \\
    \text{s.t.} \quad  & \left \| u \right \|_{\infty}  \leq u_{\text{max}}\\
    & A(x)u \geq b(x) , 
\end{align}
where $$L_c =  I_{N} \otimes 
\begin{bmatrix}
1 & 0 \\
0 & l_p
\end{bmatrix}$$ 
is a weighting matrix that alleviates dead-lock situations by encouraging alterations in the angular velocities rather than the linear velocities and $\otimes$ denotes the Kronecker product.  This formulation ensures that the barrier functions minimally alter $u_{\text{nom}}$ to render the input safe while taking actuation limits into account. \todo{To account for actuator limits, \eqref{eq:QP2} includes the constraint $\|u\| \leq u_{\text{max}}$, which does not change the results of Proposition~\ref{prop:control-synthesis} where the existence of a solution is assumed. Note, however, that in this specific scenario, a solution to \eqref{eq:QP2} is always guaranteed to exist since the robots are modelled as single-integrators ($u=0$ can always satisfy the constraints). For more general cases, we refer the reader to \cite{ames2019control, wang2017} and \cite{squires2018constructive} for approaches on how to construct CBFs that guarantee the existence of solutions under actuator constraints which typically rely on a backup safe maneuver.} In the next section, we present experimentation demonstrating how the robust CBF formulation decreases the number of constraint violations during autonomous operation of the Robotarium.

\section{Experiments}
\label{sec:experiments}

We present two experimental scenarios to demonstrate the efficacy of the proposed framework on a team of differential drive robots. The various experimental parameters (e.g., dimensions of the robots pictured in Figure~{\ref{fig:GRITSBOT}}) are presented in Table~{\ref{tab:parameters}}. Moreover, the robust CBF formulation used in the experiments utilizes the multiplicative-disturbance case as explained in Section~\ref{sec:barrier-functions-for-disturbed}.

\begin{table}[tb]
    \centering
    \caption{Values of the relevant GRITSBot-X's and experiments' parameters.}
    \begin{tabular}{|c|c|c|c|c|c|}
        \hline
       $l_p$ (m)  & $l_b$ (m) & $r$ (m) & $\delta$ (m) & $\gamma$ & $k_c$ \\
       \hline
        $0.03$& $0.105$ & $0.016$& $0.12$& $700$ &  $2$ \\
       \hline
    \end{tabular}
    \label{tab:parameters}
\end{table}

\subsection{Online Learning of Spatially Varying Disturbance}
\label{subsec:exp-spatially-varying}

The first experiment demonstrates the application of the controller-synthesis procedure presented in Section~{\ref{sec:control-synthesis-via}} in an online safe-learning scenario.  In this case, the word safe refers to collision avoidance.  In particular, GP models are used, in real time, to estimate the disturbance.  In turn, the robust-CBF framework, as described by the specialized results in Section~\ref{sec:robust-collision-avoidance}, utilizes the GPs to ensure the safety of the robots. To learn the underlying disturbance, the robots are tasked to drive to states (obtained through a discretization of the state space) for which the variance is highest.  This variance-based strategy is commonly used to learn the environmental disturbance in a sample-efficient fashion. 

\todo{As discussed in subsection \ref{subsec:gpr_for_estimation}, motivated by the physical constraints of the GRITSBot-X, we assume that the disturbance $D_M$ used in the dynamics \eqref{eq:gritsbot-dynamics-disturbed} is given by \eqref{eq:DM_grits}, where each nonzero $\psi_{ij}(x) = \{\mu_{ij}(x) - k_c \sigma_{ij}(x), \mu_{ij}(x) + k_c \sigma_{ij}(x)\}$ is estimated using its own GP model.} As such, for a data point $k$, the labels for the three GP models are generated using the following equations
\begin{align*}
    [y^{(k)}]_{11} &= \hat{\dot{x}}^{(k)} / v^{(k)} - \cos{(\theta^{(k)})},\\
    [y^{(k)}]_{21} &= \hat{\dot{y}}^{(k)} / v^{(k)} - \sin{(\theta^{(k)})},\\
    [y^{(k)}]_{32} &= \hat{\dot{\theta}}^{(k)} / \omega^{(k)} - 1,
\end{align*}
where $\hat{\dot{x}}^{(k)}$, $\hat{\dot{y}}^{(k)}$ and $\hat{\dot{\theta}}^{(k)}$ denote the measured velocities of the states in \eqref{eq:robDynamics} for data point $k$. Note that singularities can occur if $v^{(k)}$ or $\omega^{(k)}$ equal zero, therefore we discard such data points. \todo{Moreover, the disturbance models are updated asynchronously for every new $50$ data points collected. Initially, when no data points have been collected, we use a large disturbance interval $\psi_{ij}(x)~=~\{\psi^{\text{min}}_{ij}, \psi^{\text{max}}_{ij}\}$, until the initial $50$ data points have been collected. Then, at each time step $G_p$ and $L$ are computed using the most recent $D_M$.}

As the Robotarium testbed is relatively uniform, we expect a major component of the disturbance to stem from imperfections intrinsic to the robot---a component that is not spatially varying. Therefore, in order to demonstrate the ability of the proposed disturbance estimation framework of capturing spatially varying disturbances, in the top-left quarter of the arena we simulate a disturbance by multiplying the input of the robots by $0.80$.

Live screenshots of the experiment at the beginning, midpoint, and end are shown in Figure~\ref{fig:exp1-screenshots}.
Note that, in addition to the real disturbance inherent to the system, the simulated disturbance induced in the top left quarter of the arena is equivalent to multiplying the matrix $g_x(x_i)$ from \eqref{eq:gritsbot-dynamics-disturbed} by $0.8$, resulting in a non-trivial disturbance $D^{\text{sim}}_M(x_i) = -0.2g_x(x_i)$. Shown in Figures~\ref{fig:exp1-mu_corners_tl} and \ref{fig:exp1-mu_corners_br} is the estimate of the disturbance at the the top-left and bottom-right corners of the arena respectively. As shown in the plots, the true simulated disturbance for both corners (dashed red lines) falls between the $95\%$ confidence bounds (gray fill) of the estimate in all plots, indicating that our estimation does indeed capture the simulated disturbance. We note however, that as shown in the right hand plot of Figure~\ref{fig:exp1-mu_corners_tl}, the mean estimated disturbance on $\dot{\theta}$ does not match the expected effect of the simulated disturbance, which is most likely due to the effect of the disturbance inherent to the GRITSBot-Xs.   

Figure~\ref{fig:exp1-minh} shows a plot of $\min_{ij} h_{ij}$ over time, which showcases that zero constraint violations occur throughout the experiment.  This lack of violations demonstrates the efficacy of the proposed robust-CBF formulation with respect to safe learning under a spatially varying disturbance. Overall, in this experiment, the robots were able to learn the underlying disturbance using the GP-based approach while avoiding inter-robot collisions.

\begin{figure}[tb]
    \centering
    \includegraphics[width=\linewidth]{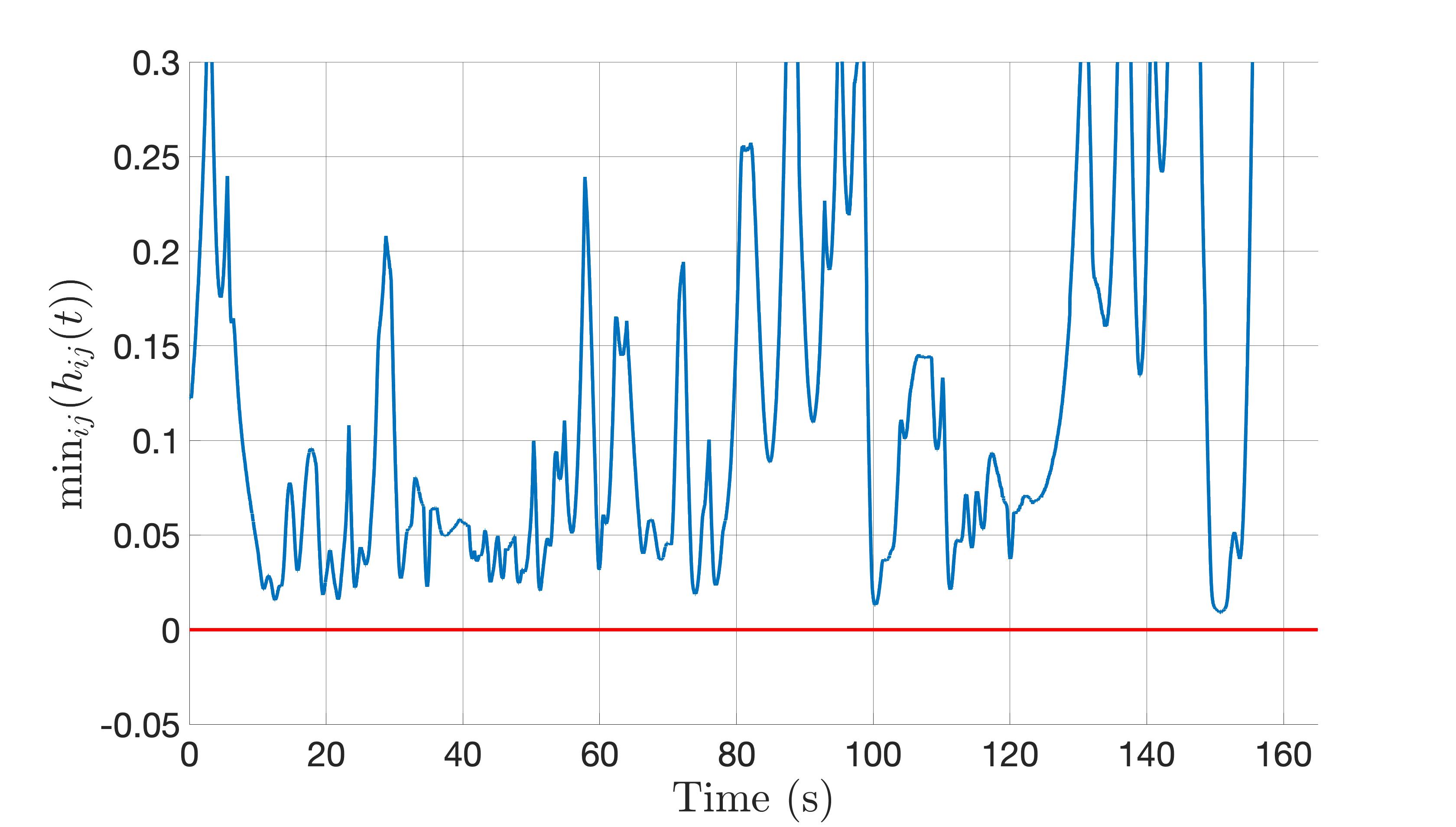}
    \caption{Plot of $\min_{ij}h_{ij}$ over time for experiment $1$, where a simulated spatially varying disturbance is applied to the robots. As highlighted by the plot, $0$ constraint violation occur with the robust CBF formulation demonstrating its ability to account for spatially varying disturbances.}
    \label{fig:exp1-minh}
\end{figure}
\begin{figure*}[tb]
    \centering
    \includegraphics[height=32mm]{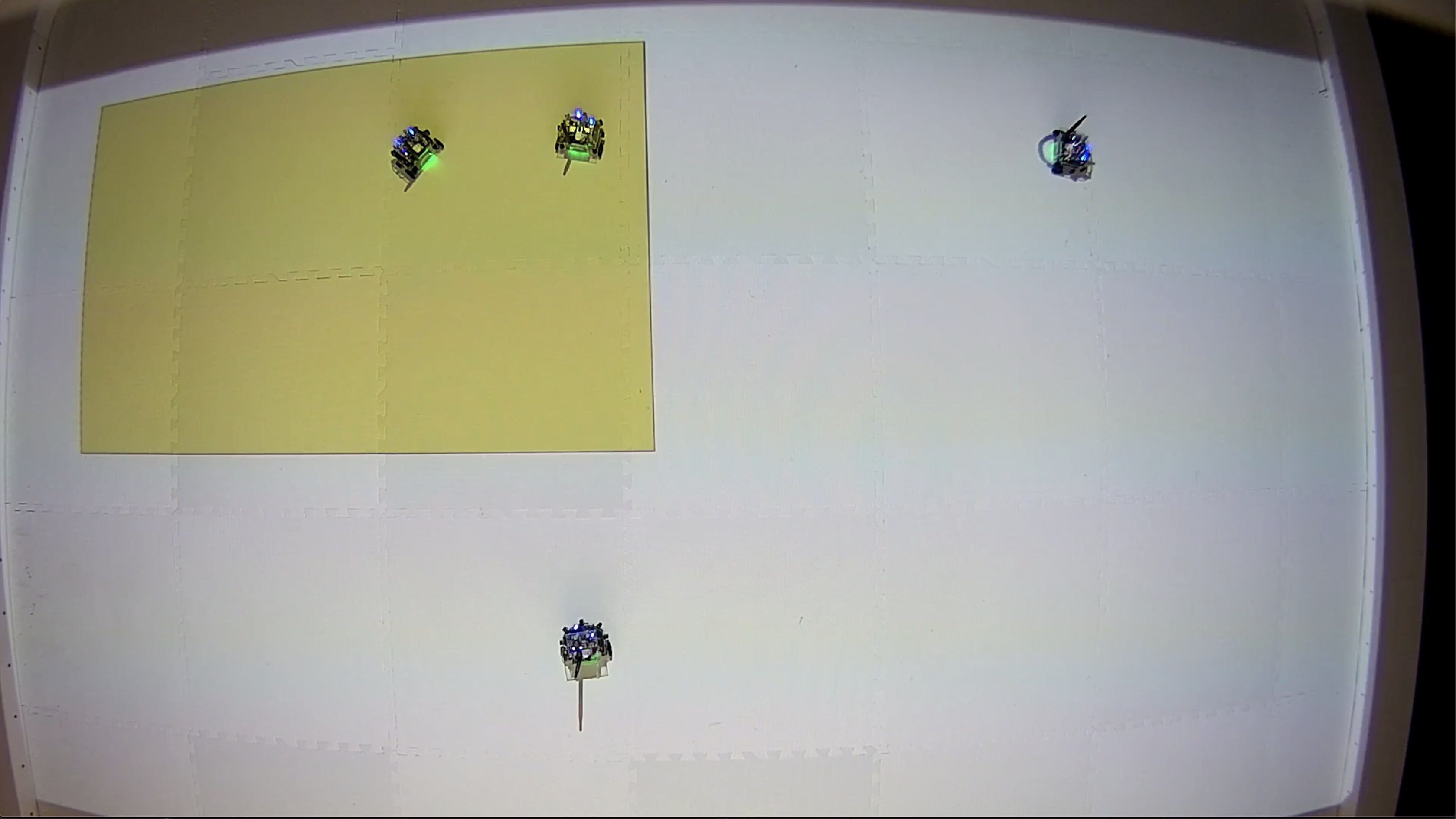}
    \includegraphics[height=32mm]{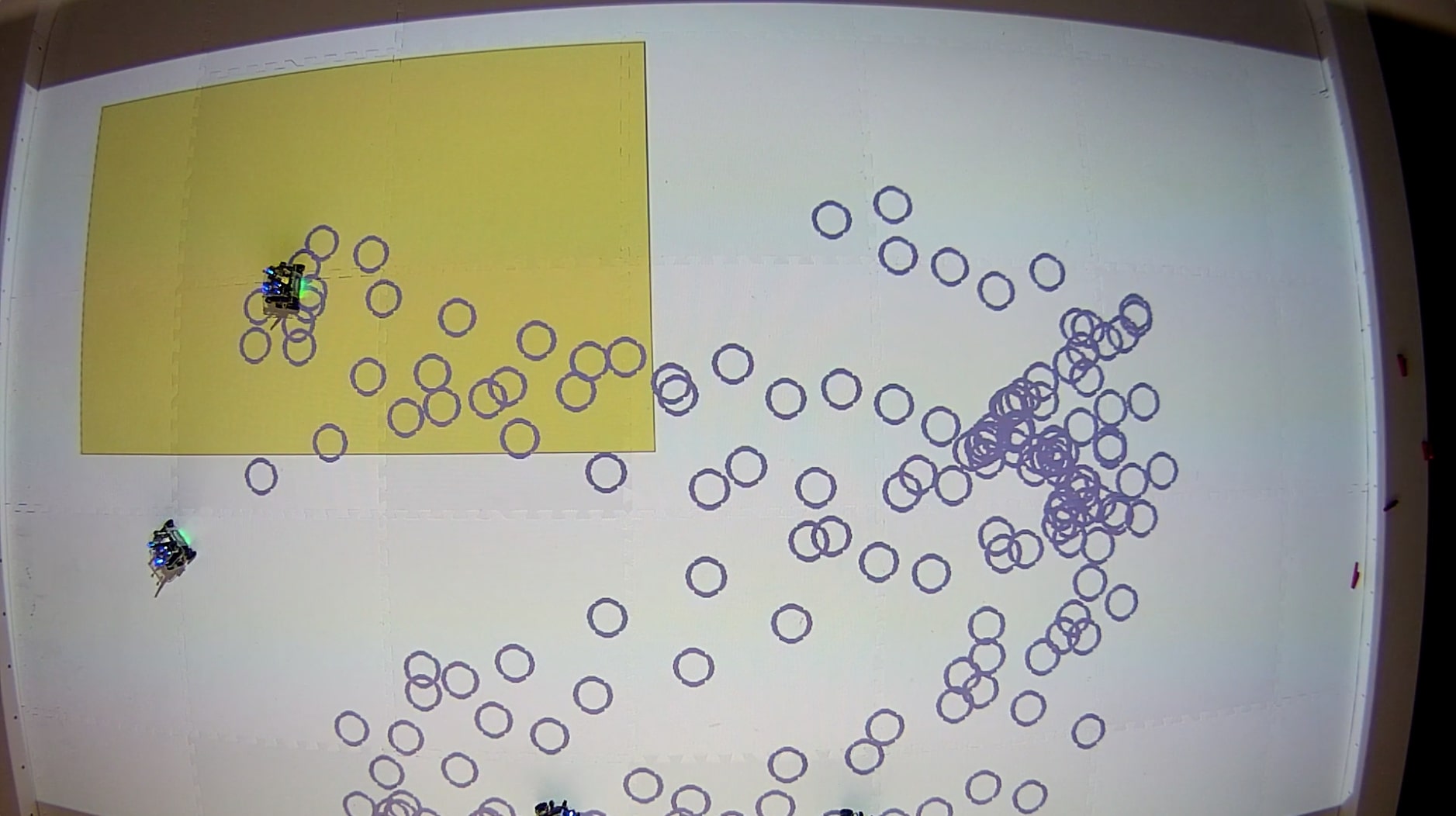}
    \includegraphics[height=32mm]{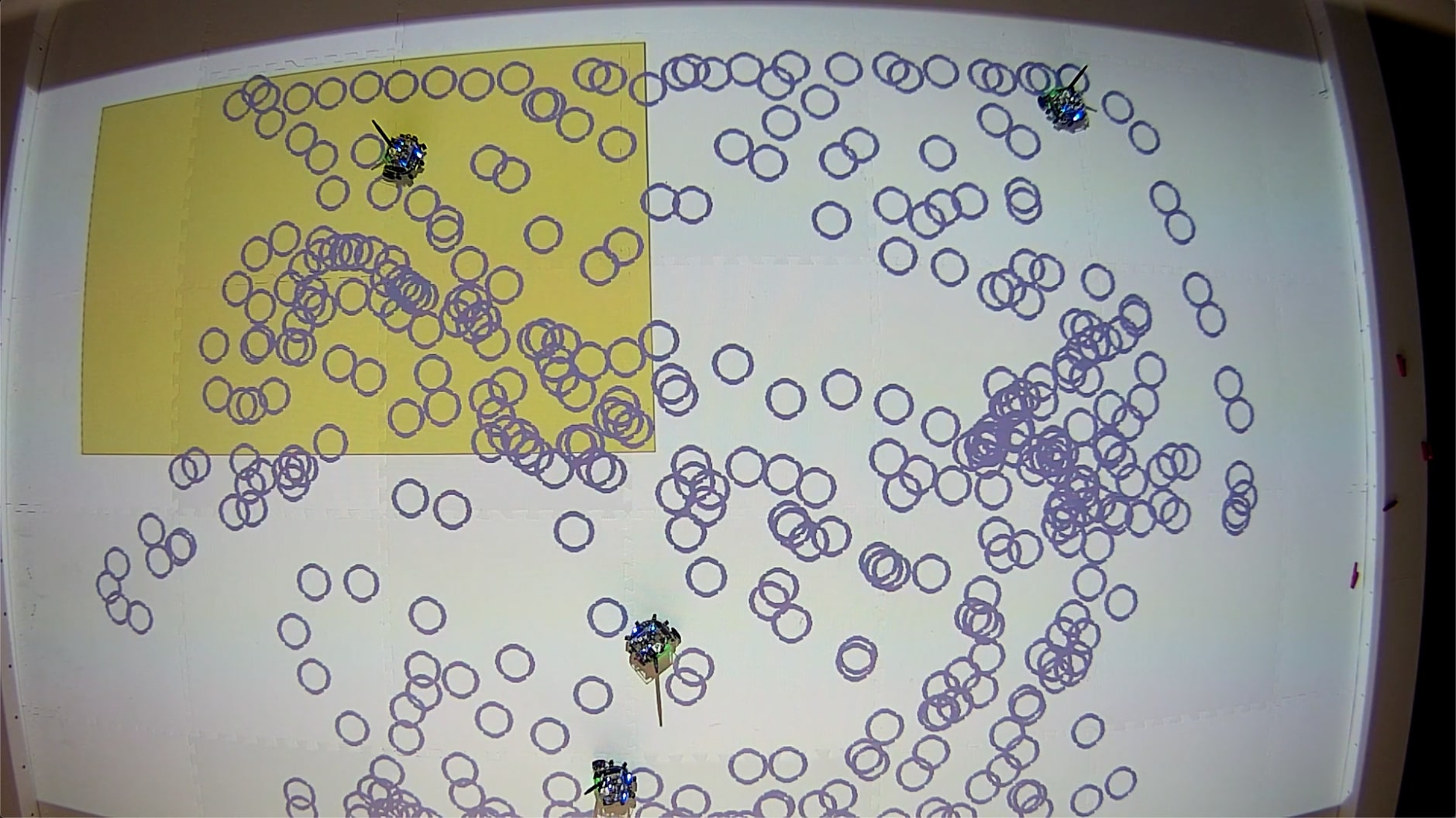}
    \caption{Live screenshots of experiment $3$'s beginning (left), halfway point (middle) and end (right). The yellow region denotes an area where a simulated disturbance---the input of the robots are multiplied by $0.8$---is applied. The blue circles denote points where data was collected for fitting the GP models aimed at estimating the disturbance.}
    \label{fig:exp1-screenshots}
    \vspace{4mm}
\end{figure*}
\begin{figure*}[tb]
    \centering
    \includegraphics[height=36mm]{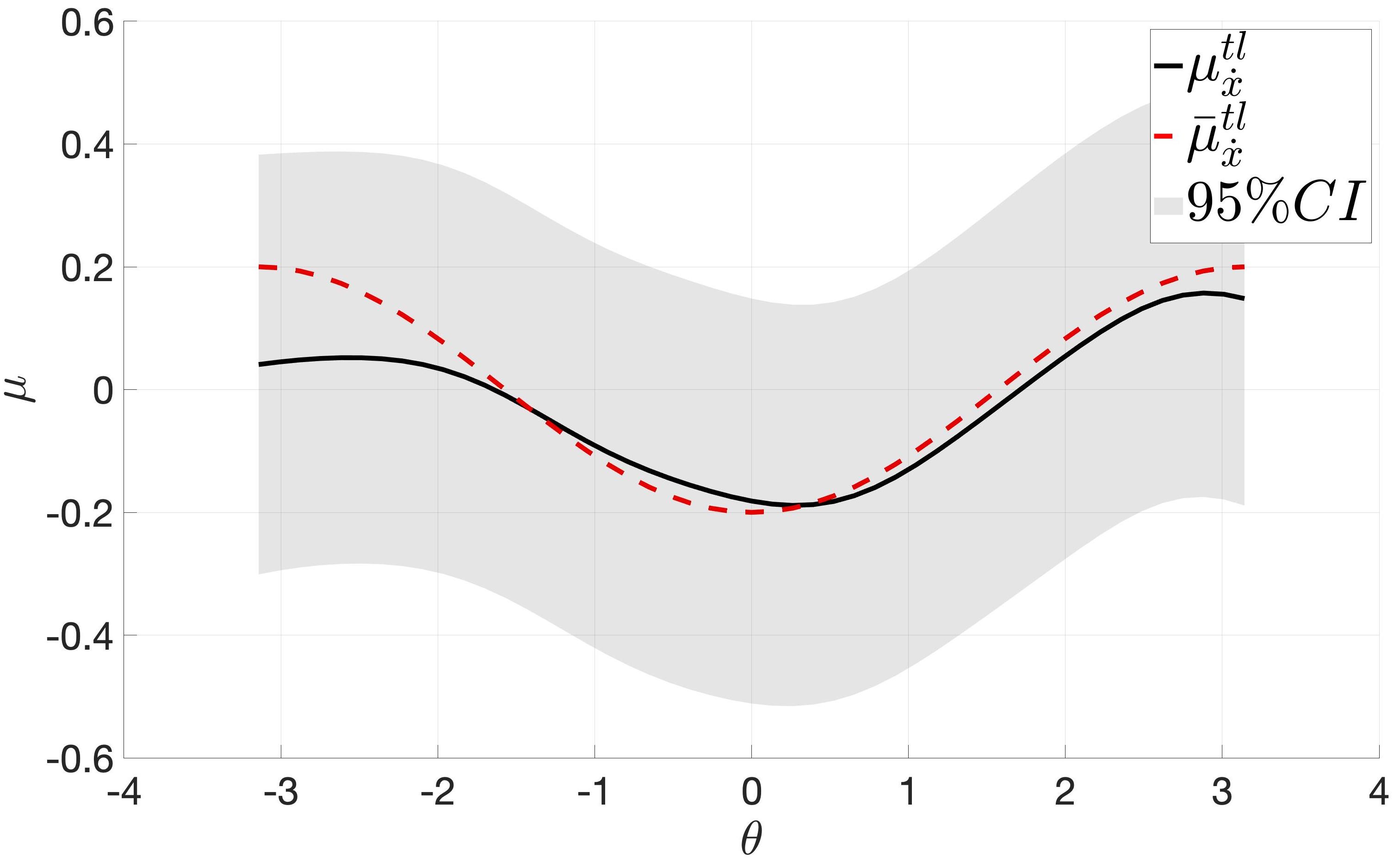}
    \includegraphics[height=36mm]{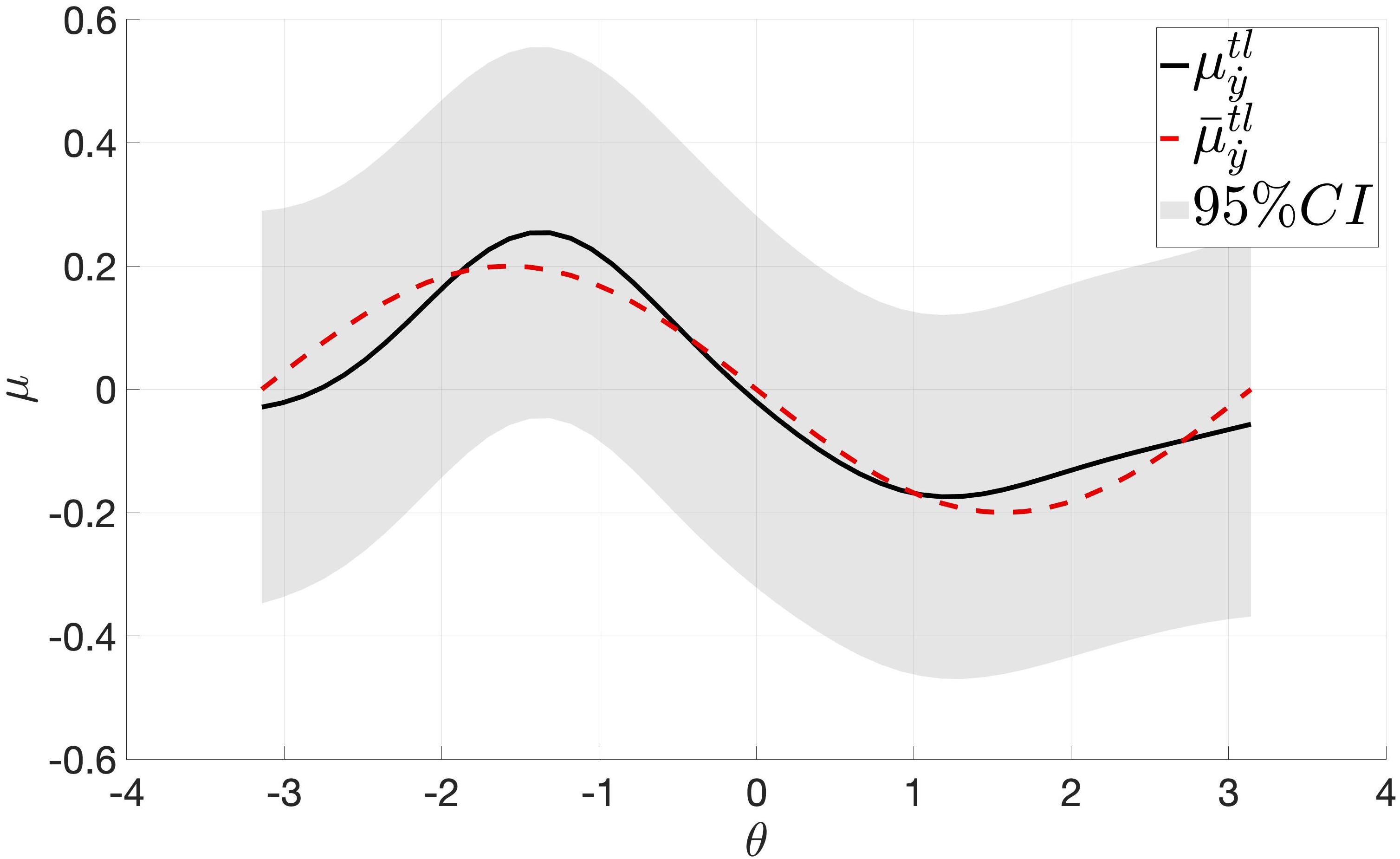}
    \includegraphics[height=36mm]{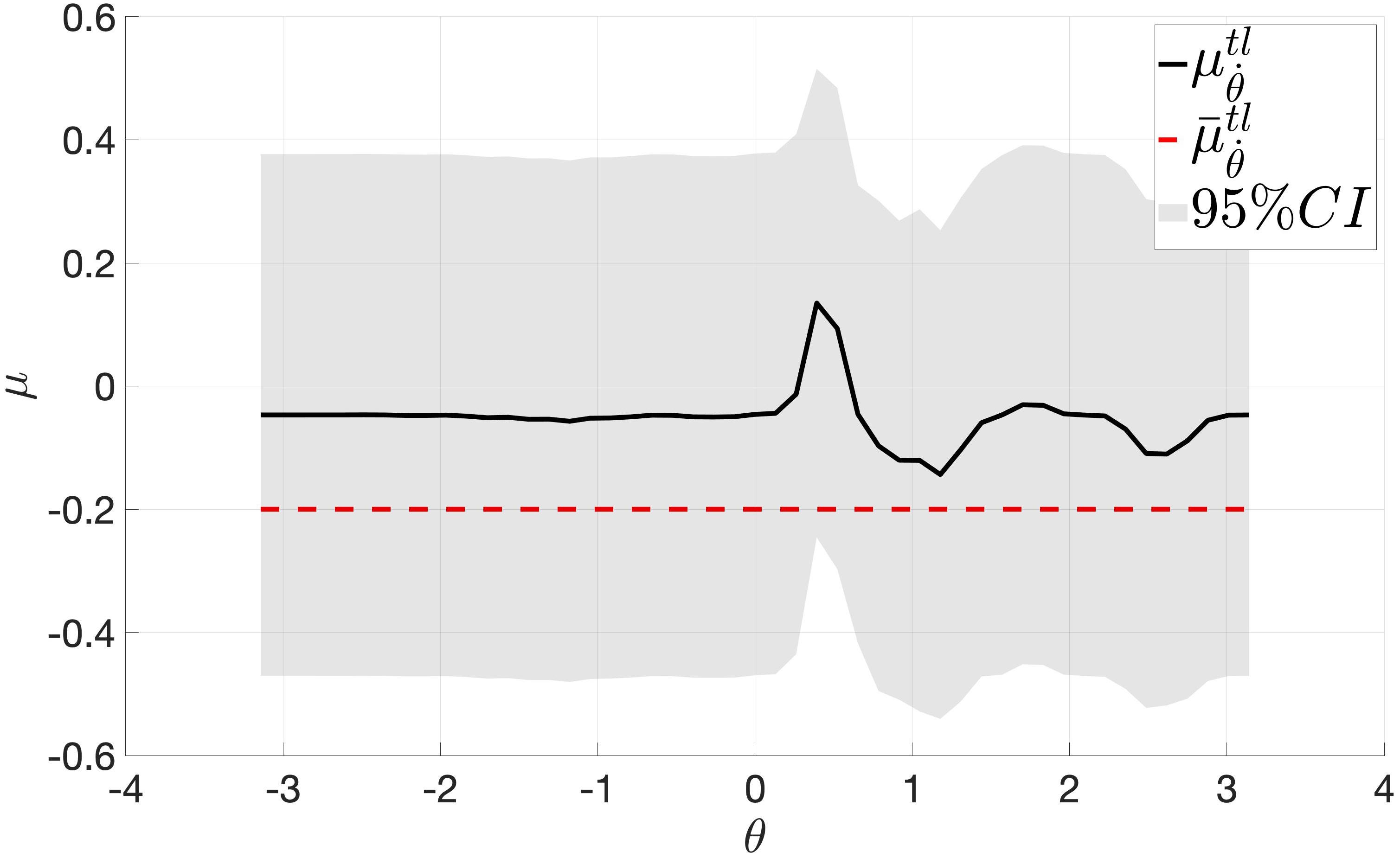}
    \caption{The disturbance estimate on $\dot{x}$ (left), $\dot{y}$ (middle) and $\dot{\theta}$ (right) at the top-left ($tl$) corner of the arena and the fill represents the $95\%$ confidence interval for experiment $1$. The dashed red line is the true simulated disturbance at the top left corner ($-0.2g(x)$) of the arena. It is clear the that the dashed red line is within the confidence bounds in each plot indicating that our estimate does indeed capture the simulated disturbance.}
    \label{fig:exp1-mu_corners_tl}
    \vspace{4mm}
\end{figure*}
\begin{figure*}[tb]
    \centering
    \includegraphics[height=36mm]{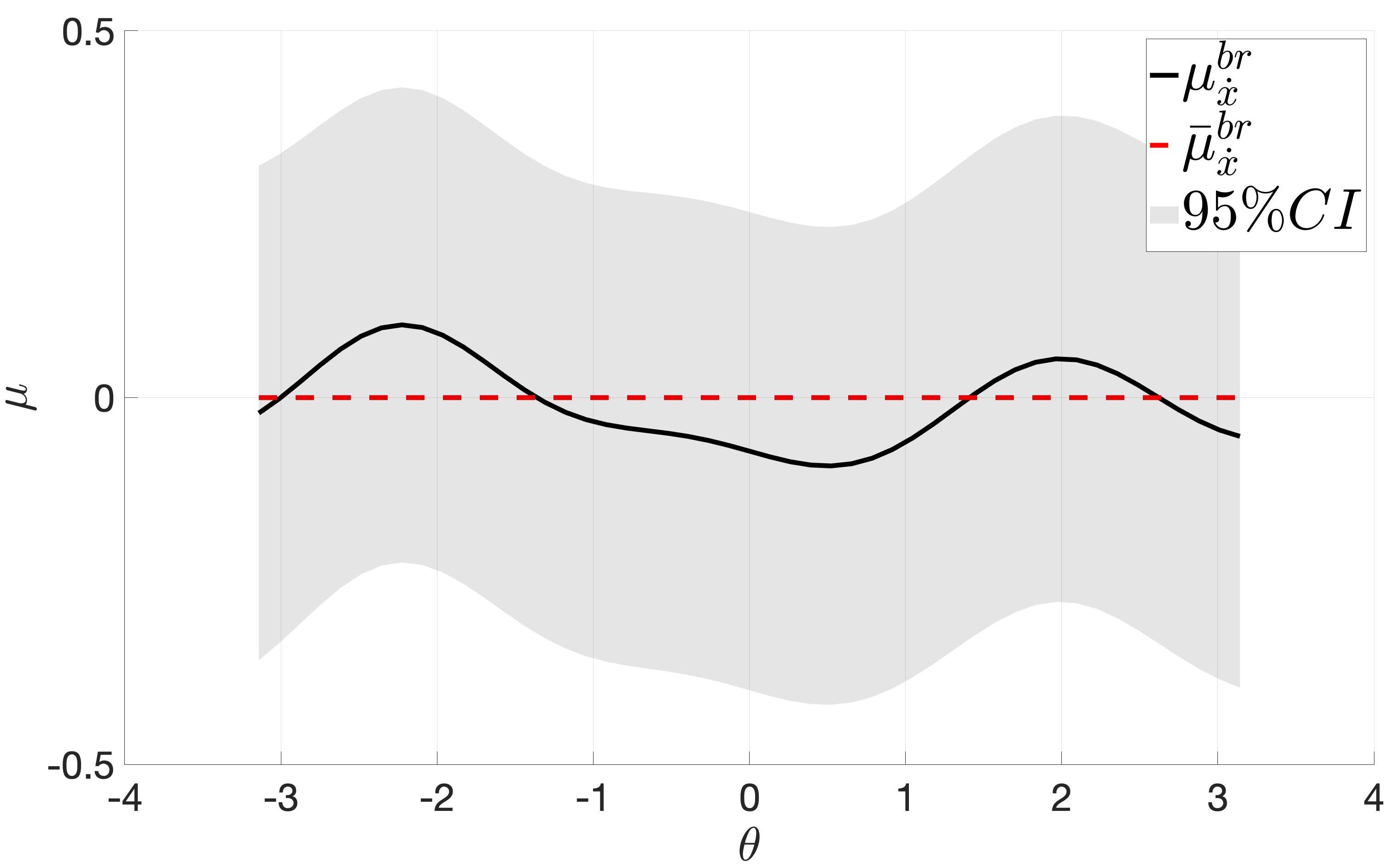}
    \includegraphics[height=36mm]{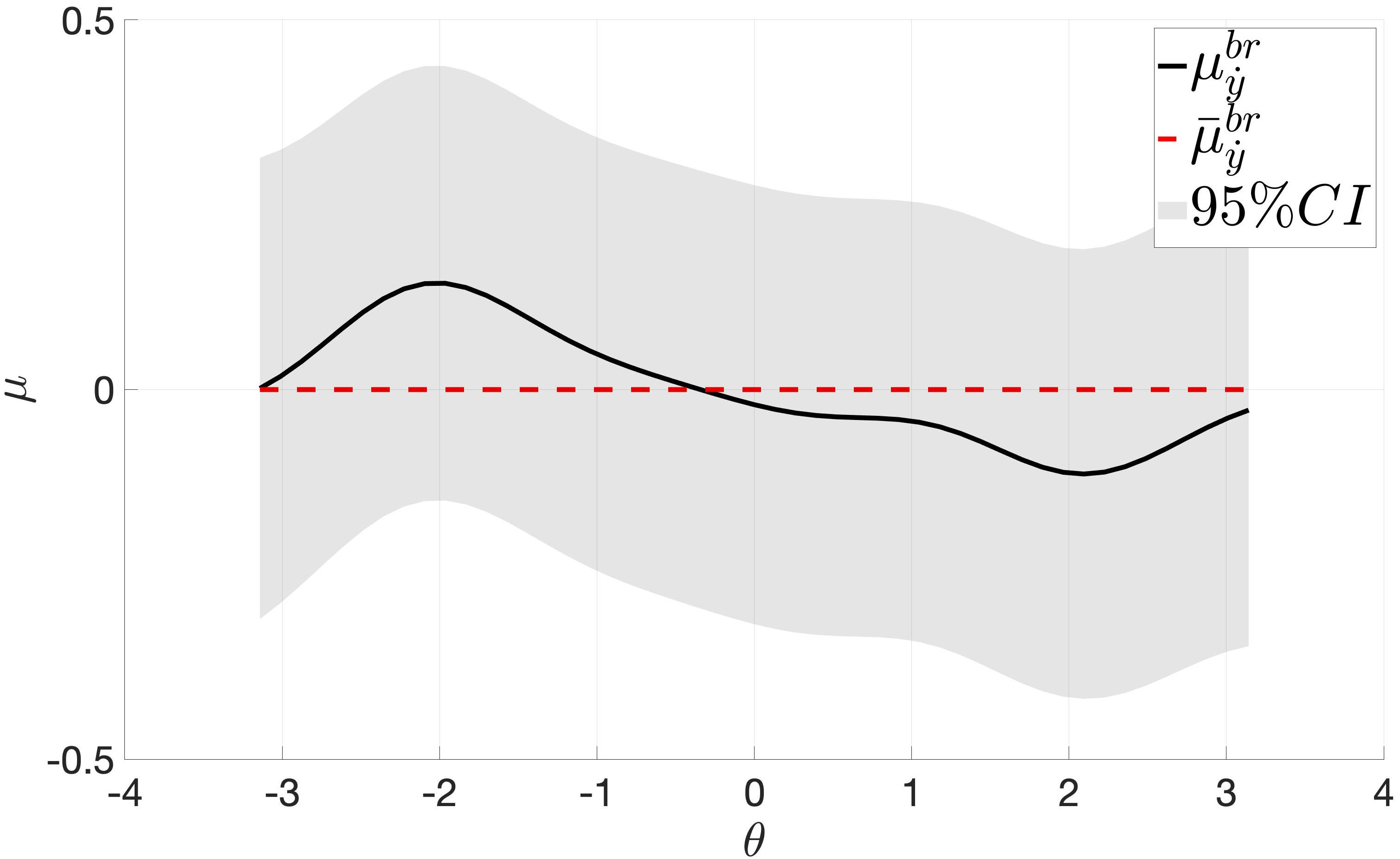}
    \includegraphics[height=36mm]{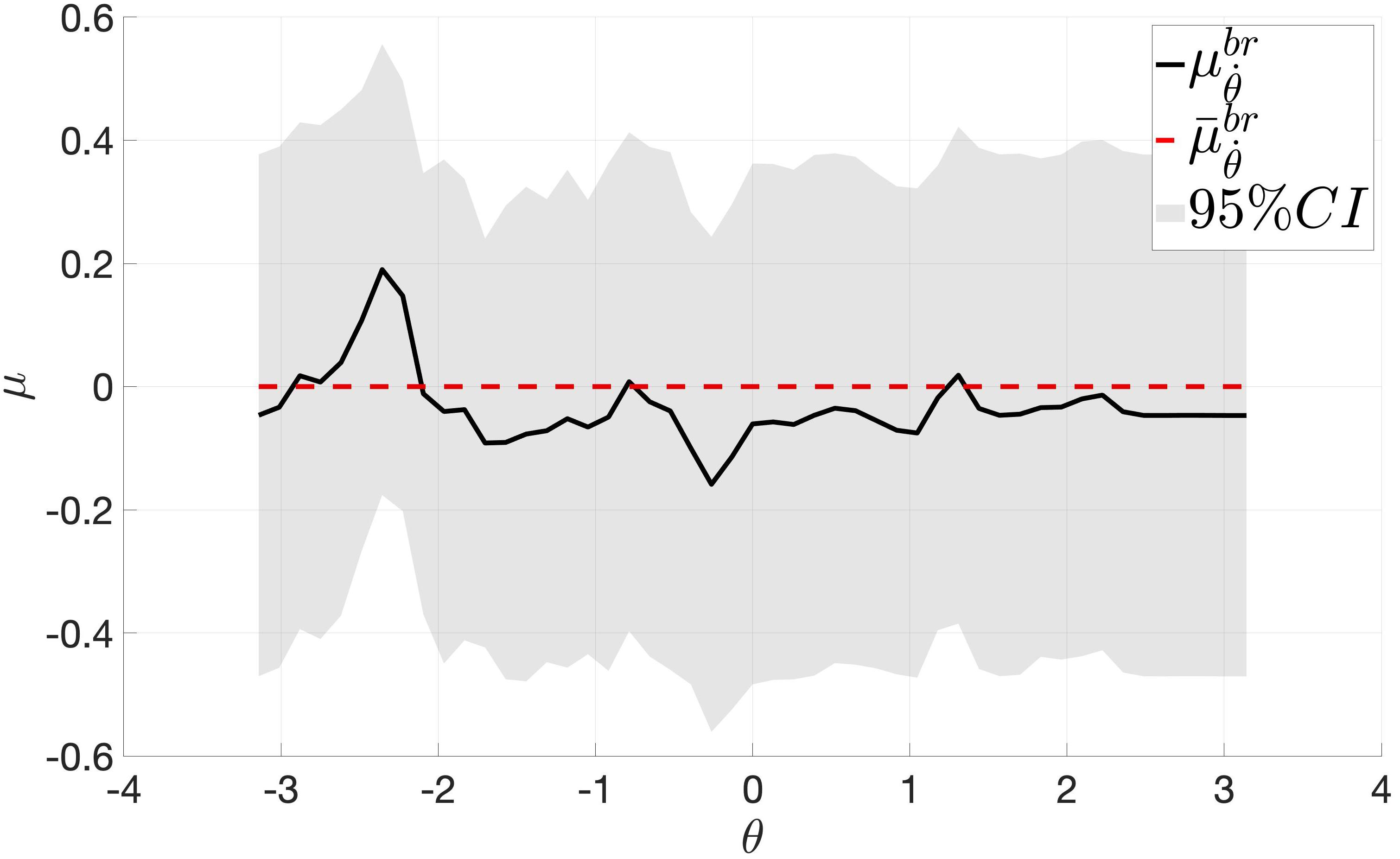}
    \caption{The disturbance estimate on $\dot{x}$ (left), $\dot{y}$ (middle) and $\dot{\theta}$ (right) at the bottom-right ($br$) corner of the arena.  The filled area represents the $95\%$ confidence interval for experiment $1$, and the dashed red line is the true simulated disturbance at the bottom-right corner ($0$) of the arena. It is clear the that the dashed red line is within the confidence bounds in each plot indicating that our estimate does indeed capture the simulated disturbance.}
    \label{fig:exp1-mu_corners_br}
    \vspace{4mm}
\end{figure*}
\begin{figure*}[tb]
\centering
\begin{minipage}{0.33\textwidth}
\includegraphics[width=\textwidth]{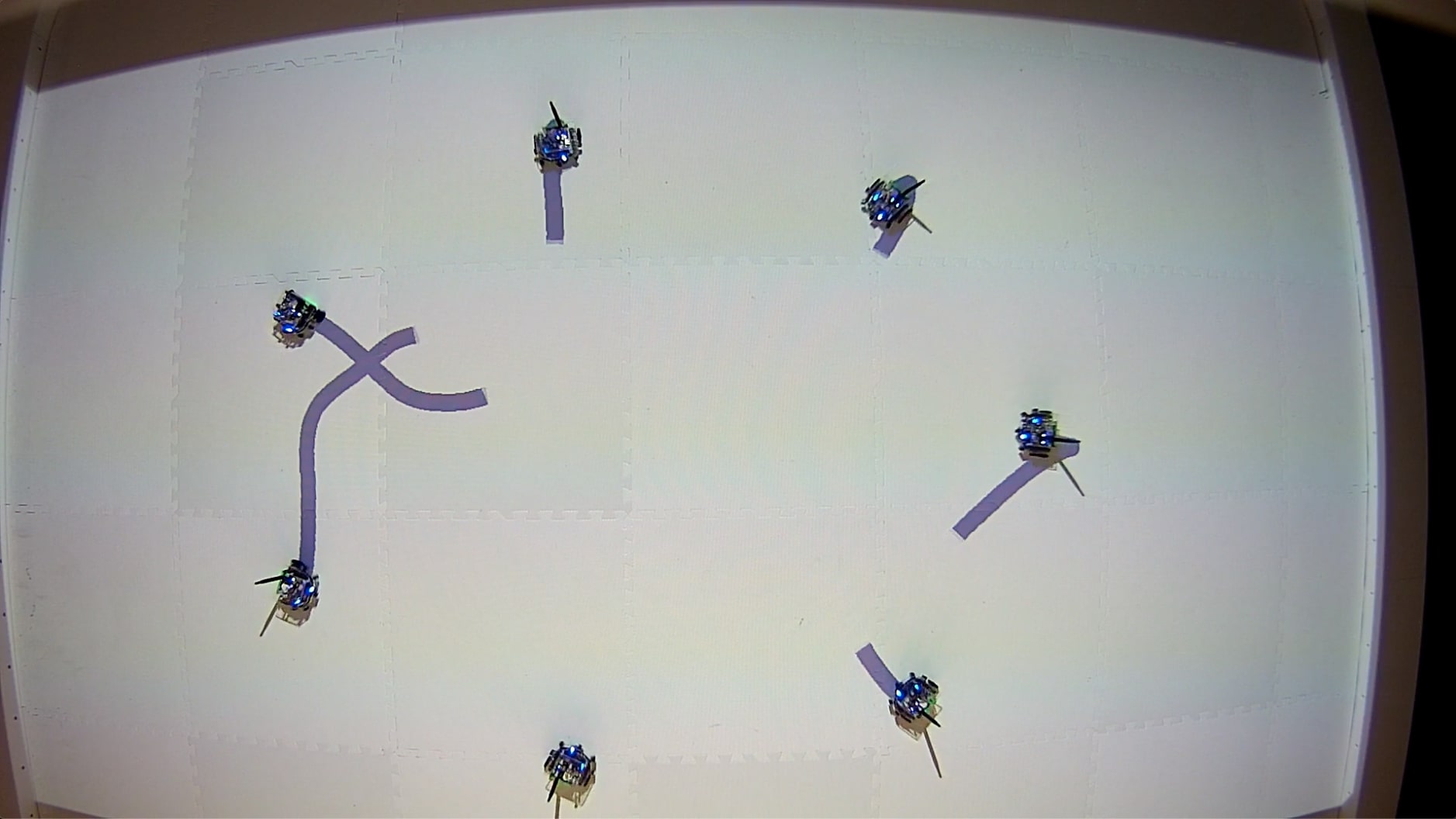}
\end{minipage}~%
\begin{minipage}{0.33\textwidth}
\includegraphics[width=\textwidth]{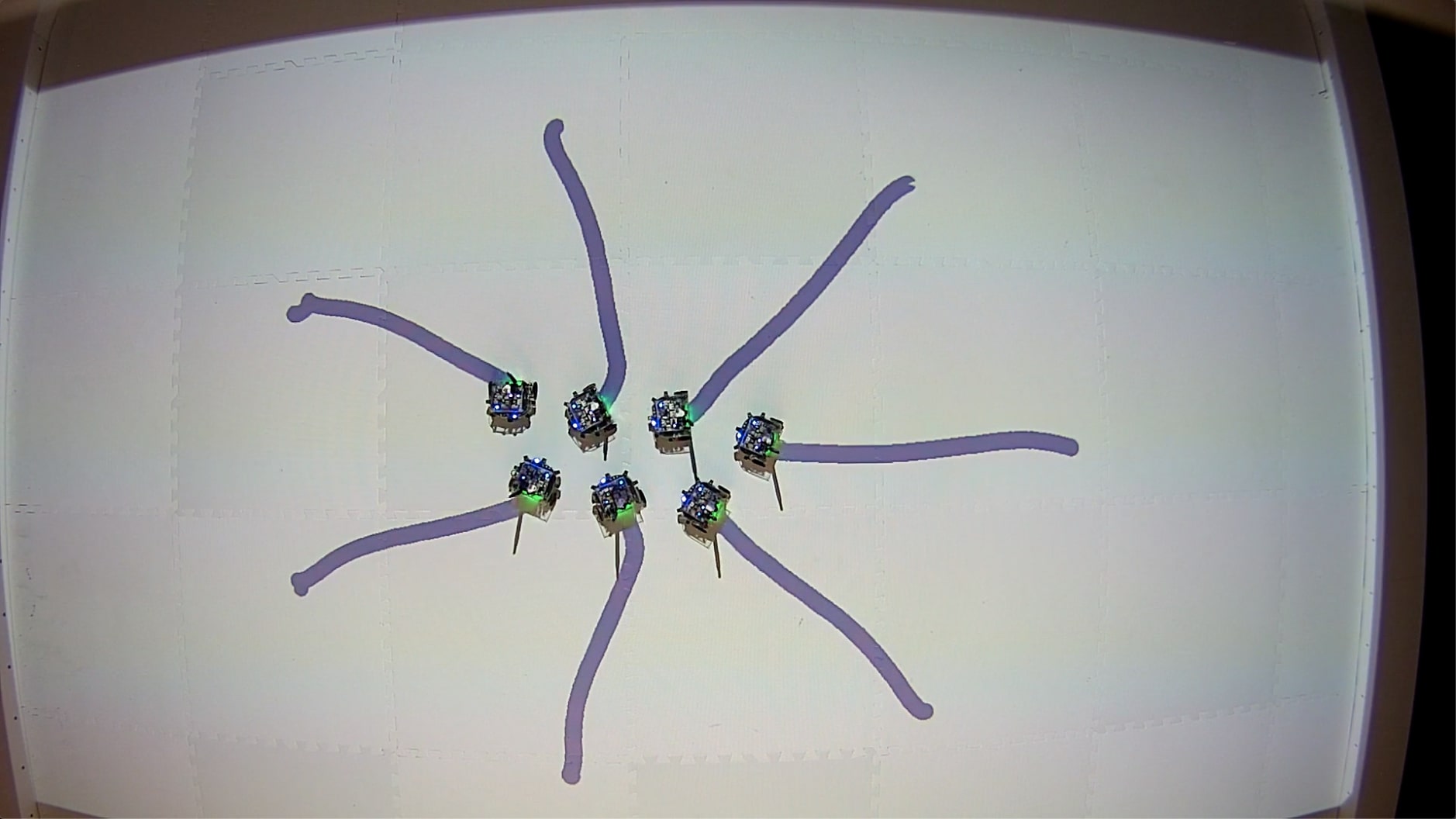}
\end{minipage}~%
\begin{minipage}{0.33\textwidth} 
\includegraphics[width=\textwidth]{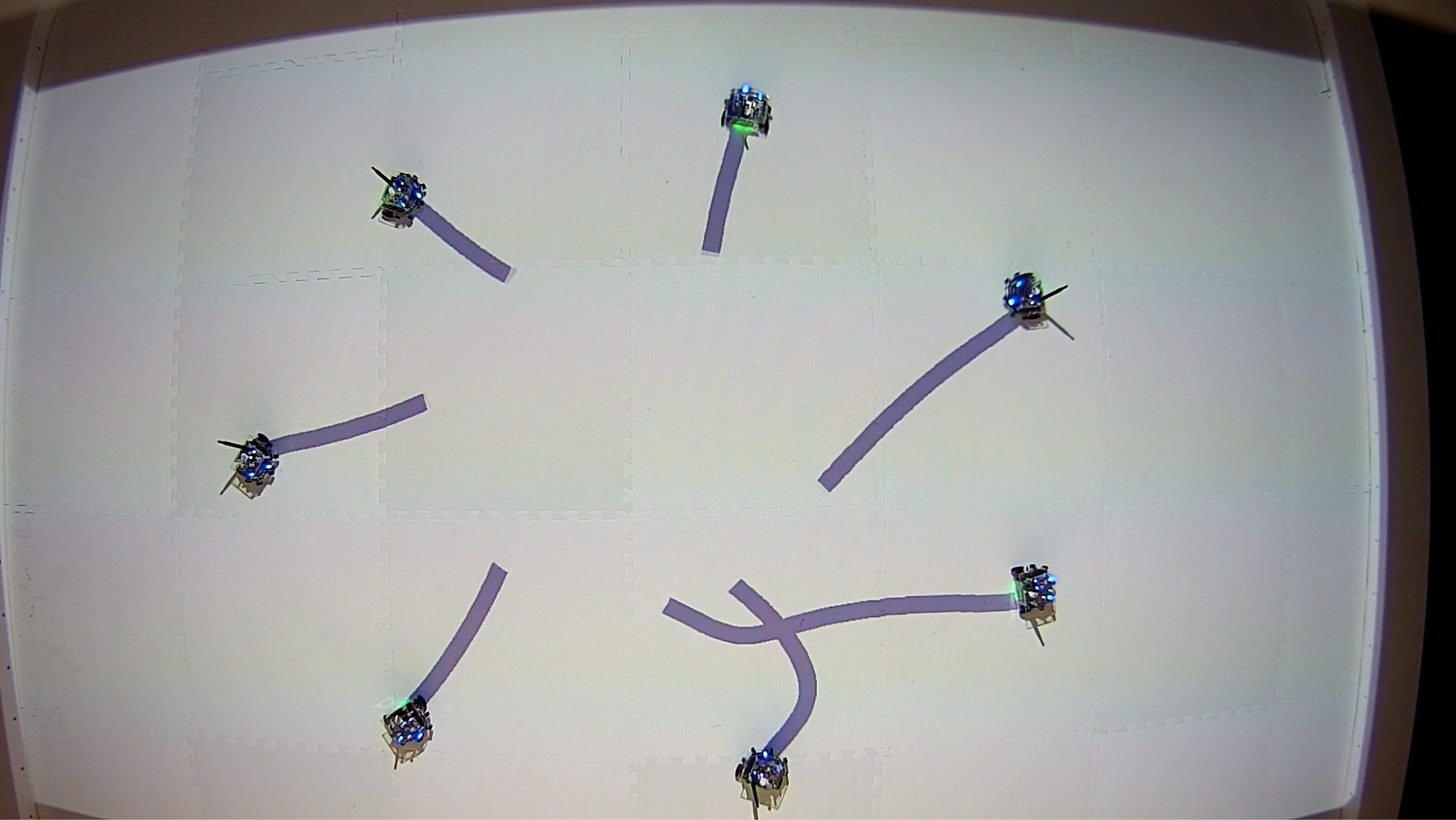}
\end{minipage}~%
\caption{Screenshots of experiment $2$. A group of $7$ GRITSBot-Xs completes an iteration of the repeated experiment detailed in sub-section~\ref{subsec:exp2}.  The robots are initially arranged on a circle (left) and attempt to traverse to the opposite side (right).  The Robotarium utilizes the multiplicative robust CBF based controller synthesis coupled with GPs for disturbance estimation to prevent collisions (middle) and ensure that each robot reaches the opposite side of the circle (right). A video including the experiment can be found at \texttt{https://youtu.be/DhzEktqnoCQ}.}
\label{fig:xbarrier}
\vspace{4mm}
\end{figure*}

\begin{figure*}[tb]
\centering
   \includegraphics[width=.49\textwidth]{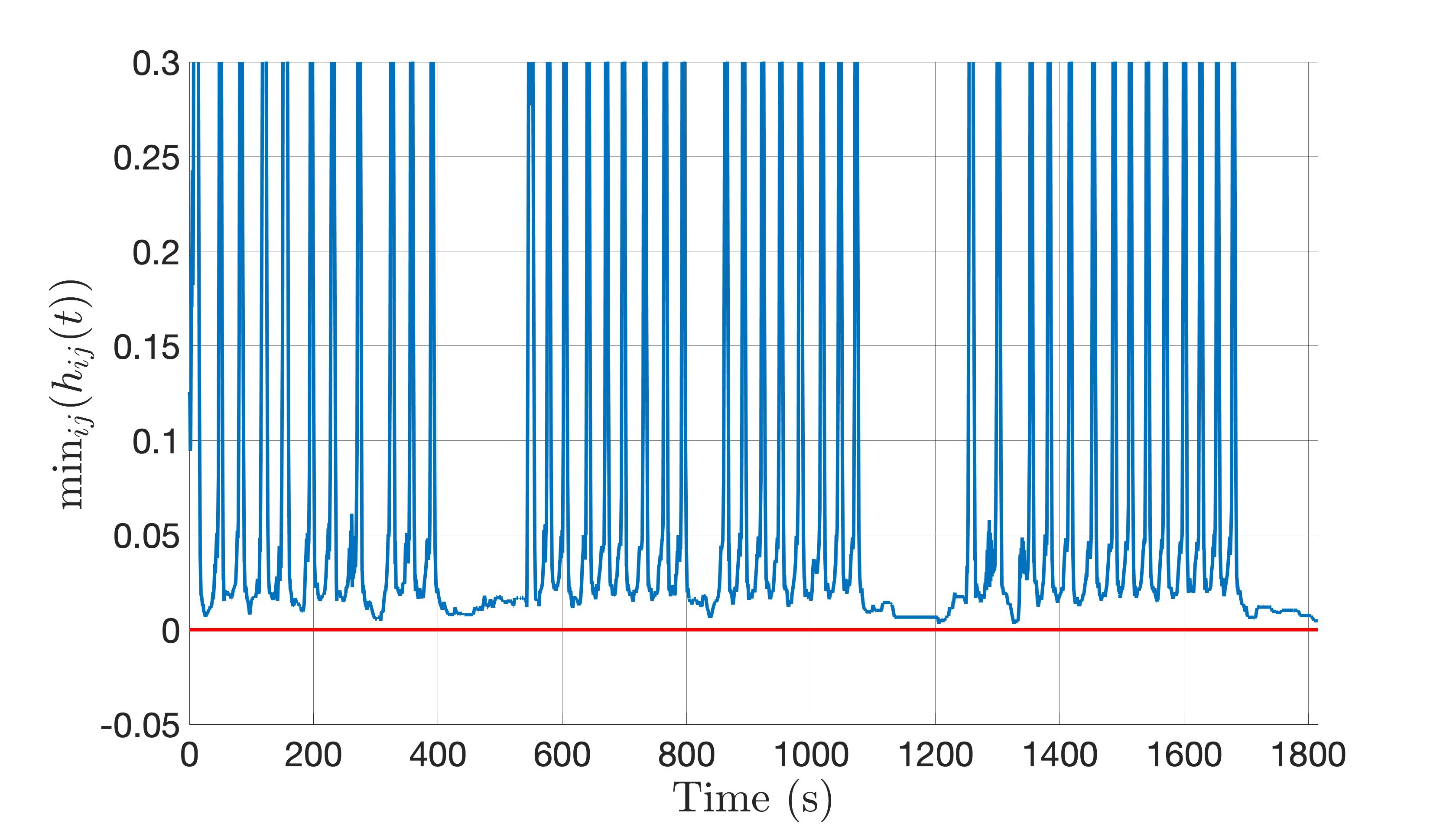}
   \includegraphics[width=.49\textwidth]{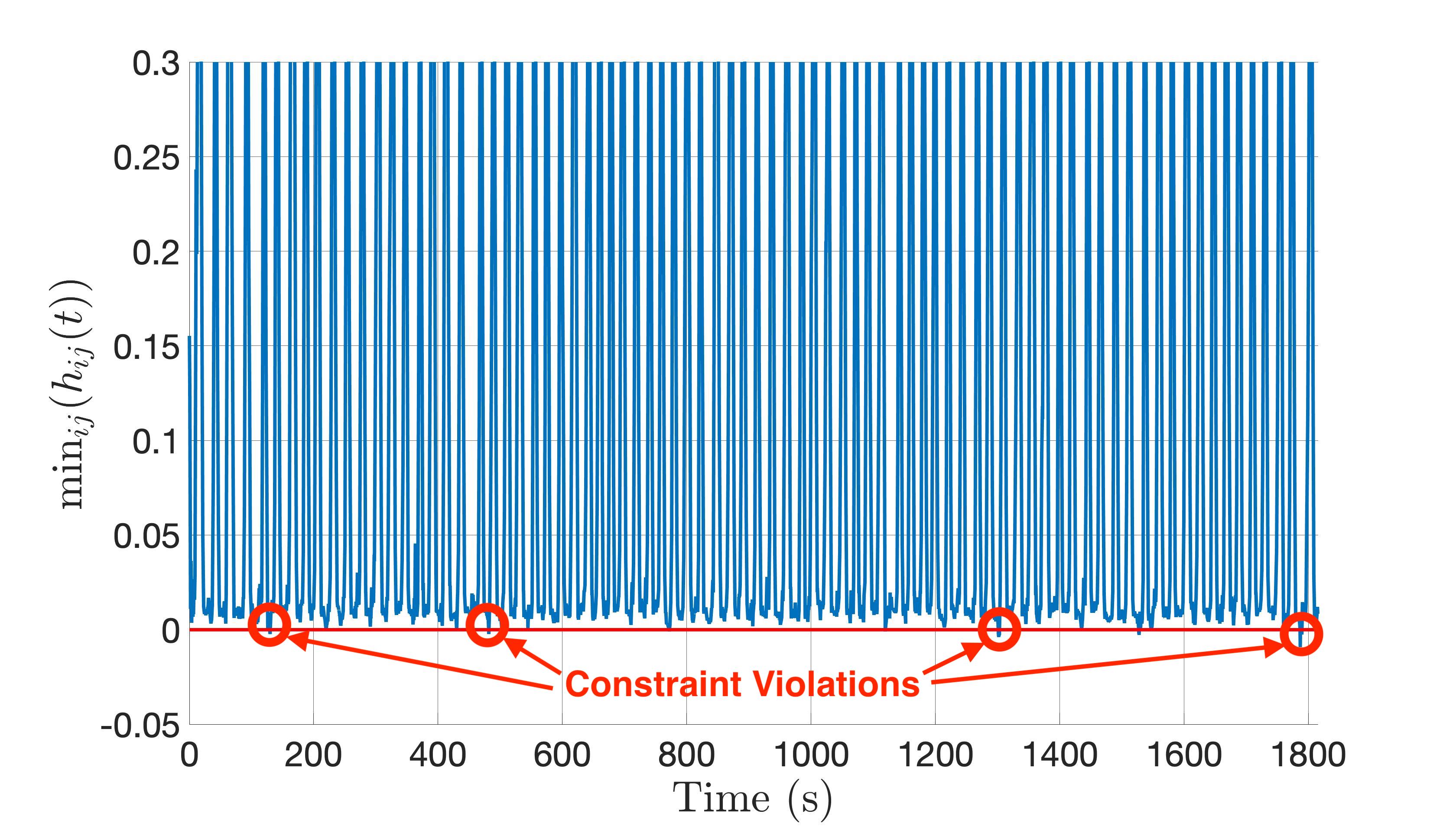}
\caption{Plots of $\min(h_{ij}(t))$ over time for the experiments with and without the robust CBF formulation (left and right, respectively).  At each point in time, if the value $\min(h_{ij}(t))$ is below the solid red line (i.e., the value is negative) then the collision-avoidance constraint is violated.  The robust CBF formulation (left) encounters zero constraint violations whereas the non-robust CBF formulation (right) does encounter constraint violations as shown in Table~\ref{tab:wct}.}
\label{fig:minh}
\end{figure*}

\subsection{Comparison with regular CBFs}
\label{subsec:exp2}
\todo{The objective of this experiment is to compare the robust and non-robust CBF formulations and to showcase how the robust formulation drastically decreases the number of constraint violations caused by network latency, model imperfection, and wheel slip issues relative to its non-robust counterpart.} The setup of the experiment is as follows. Initially, seven GRITSBot-Xs drive to a circular formation as shown in \autoref{fig:xbarrier}. Then, each robot is commanded to drive to the opposite end of the circle relative to their current position using a proportional controller. This maneuver is performed repeatedly over a $30$-minute time interval. Note that the maneuver causes the robots' paths to cross at the center of the circle as highlighted in \autoref{fig:xbarrier}. The experiment is performed for both the robust and non-robust formulations. In the case of the robust CBF, the disturbed portion of the dynamics was obtained offline via GP models as described in Section~\ref{sec:disturb-estimation}.

We compare both formulations by examining the frequency of collisions which occur\todo{, the number of maneuvers completed and the deviation between the nominal and applied control inputs} in each experiment as well as the wall-clock time of solving the QP. To compare \todo{the frequency of collisions}, for both experiments, we record the minimum $h_{ij}$ value at each time step and check if it is negative (i.e., a collision). The plot of the minimum value of $h_{ij}$ over time is shown in Figure~\ref{fig:minh}. As exhibited in Table~\ref{tab:wct}, $0$ constraint violations occur during the experiment using the robust CBF formulation. On the other hand, the non-robust formulation does periodically experience collisions, since the minimum value of $h_{ij}$ frequently drops below $0$ (with the first violation occurring after just $5$ repetitions). These violation mostly occur when the robots are clustered near the center of the circle and can be attributed to network latency, model imperfection, and wheel slip issues. 

\todo{Moreover, because the robust CBF formulation explicitly accounts for the disturbance, the framework generates a more conservative behavior than its non-robust counterpart, resulting in some performance degradation. This effect can be evaluated by inspecting the number of peaks in Figure~\ref{fig:minh}, each of which denotes the completion of a single maneuver. Specifically, the robots execute the maneuver $42$ times during the robust CBF experiment which is roughly half of the $82$ maneuvers executed during the non-robust CBF one. It is worth noting that one of the significant causes of this difference are three maneuvers in the robust CBF case which took significantly longer to complete relative to the others (shown by the three large gaps between peaks in the left plot of Figure~\ref{fig:minh}) most likely due to network latency. 

Another metric upon which we compare the performance of both methods is the squared norm of the difference between the nominal and applied control inputs over all time steps. As shown Table~\ref{tab:wct}, the mean squared norm difference in the robust CBF case is $19.08$, constituting roughly a $56\%$ increase over the $12.18$ mean of the non-robust CBF formulation. However, we note that this metric is less reliable than the number of maneuver completions since the application of each framework causes the robots to take different trajectories.}

Focusing on the run time, it is clear that the robust CBF formulation for multiplicative disturbances is fit for online operation since its mean solution frequency was $220$ Hz during the experiment as depicted in Table~\ref{tab:wct}. However, it is important to note that the robust CBF is slower than its non-robust counterpart, which can be attributed to the additional constraints added to the QP aimed at accounting for the disturbance ($2^m$). This addition of constraints occurs because in the case of multiplicative disturbances, checking for the worst-case corner point directly depends on the control input which is a decision variable in the QP. On the other hand, this issue does not arise in the case of additive disturbances where checking for the worst-case extreme point is independent of the control input as described in subsection~\ref{subsec:cbf_additive} and tested in \cite{emam2019robust}, where timing data and experimental verification is given for the additive formulation. The comparison of the wall-clock times of both experiments is shown in Table~\ref{tab:wct}. \todo{Overall, although the use of GPs for disturbance estimation only results in probabilistic guarantees, this experiment demonstrates how the robust-CBF formulation drastically reduces the number of constraint violations during long-term operation.}

\begin{table}[tb]
    \centering
        \caption{Comparison of the Wall-Clock Times (WCTs) for solving the Quadratic Programs with and without the robust CBF formulation. The time violation entry indicates the duration during which the constraint was violated for each experiment. \todo{The last two entries denote the mean and maximum squared norm difference between the nominal and applied control inputs over all time steps.}}
    \begin{tabular}{|c|c|c|}
        \hline
        & Robust CBF & Non-Robust CBF \\
        \hline
        \hline
       Avg. WCT (ms) & $4.580$ & $0.9411$ \\
       \hline
       Var. of WCTs ($\text{ms}^2$) & $12.200$ &  $0.020$\\
       \hline
       Avg. Freq. (Hz)  & $218$ & $ 1062$ \\
       \hline
       Time Violated (s) & $0$ & $11.682$ \\
       \hline
     \todo{ $\text{mean}(||u^*-u_{\text{nom}}||^2)$} &  \todo{$19.08$} &  \todo{$12.18$} \\
      \hline
        \todo{$\text{max}(||u^*-u_{\text{nom}}||^2)$} &  \todo{$73.83$} &  \todo{$100.81$} \\
       \hline 
    \end{tabular}
    \label{tab:wct}
\end{table}

\section{Conclusion}
\label{sec:conclusion}

This paper introduced a novel robust Control Barrier Function (CBF) framework that deals with disturbed dynamical systems by modelling the uncertainty as a union of convex hulls.  Using this set, we modelled the disturbed system as a differential inclusion which allowed us to leverage a rich set of mathematical tools, rendering our method applicable in many scenarios. Moreover, we discussed how the proposed robust CBFs can be coupled with data-driven methods aimed at disturbance estimation and introduced a controller-synthesis framework for disturbed control-affine systems. The results were then specialized to differential-drive robots and the efficacy of the approach was showcased in experiments on the Robotarium.

\bibliographystyle{unsrt}
\bibliography{citations.bib}






\end{document}